\documentclass[11pt]{article}
\usepackage{times}
\usepackage{helvet}
\usepackage{courier}

\usepackage{comment}

\widowpenalty 99999999
\clubpenalty 99999999

\frenchspacing
\setlength{\oddsidemargin}{0.25in}
\setlength{\evensidemargin}{\oddsidemargin}
\setlength{\textwidth}{6in}
\setlength{\textheight}{8in}
\setlength{\topmargin}{-0.0in}

\makeatletter
\newcommand\footnoteref[1]{\protected@xdef\@thefnmark{\ref{#1}}\@footnotemark}
\makeatother

\usepackage{todonotes}
\usepackage{amsthm}
\usepackage{amsfonts}
\usepackage{amsmath}
\usepackage{amssymb}

\usepackage{multirow}
\usepackage{xspace}
\usepackage{lscape}

\usepackage{graphicx}
\newcommand\tabrotate[1]{\rotatebox{90}{#1\hspace{\tabcolsep}}}
\newcommand\rotfix[1][-.75\normalbaselineskip]{\hspace{#1}}
\newcommand{\highcell}{\rule{0cm}{0.35cm}}
\newcommand{\specialcell}[2][c]{%
  \begin{tabular}[#1]{@{}c@{}}#2\end{tabular}}

\usepackage[numbers,sort&compress]{natbib}
\usepackage[pagebackref]{hyperref}

\usepackage{comment}
\newtheorem{definition}{Definition}

\newtheorem{proposition}{Proposition}
\newtheorem{theorem}{Theorem}
\newtheorem{lemma}{Lemma}
\newtheorem{corollary}{Corollary}

\newcommand{\np}{\ensuremath{\mathrm{NP}}}
\newcommand{\para}{\ensuremath{\mathrm{Para}}}
\newcommand{\fpt}{\ensuremath{\mathrm{FPT}}}
\newcommand{\xp}{\ensuremath{\mathrm{XP}}}
\newcommand{\wone}{\ensuremath{\mathrm{W[1]}}}
\newcommand{\wtwo}{\ensuremath{\mathrm{W[2]}}}
\newcommand{\w}{\ensuremath{\mathrm{W}}}
\newcommand{\p}{\ensuremath{\mathrm{P}}}

\newcommand{\naturals}{\ensuremath{{\mathbb{N}}}}

\newcommand{\calR}{\ensuremath{{{\mathcal{R}}}}}
\newcommand{\calS}{\ensuremath{{{\mathcal{S}}}}}

\newcommand{\opt}{\ensuremath{{{\mathrm{OPT}}}}}
\newcommand{\pref}{\succ}

\newcommand{\pos}{\ensuremath{{{\mathrm{pos}}}}}

\newcommand{\shift}{\ensuremath{{{\mathrm{shift}}}}}

\newcommand{\revnot}[1]{\overleftarrow{#1}}

\newcommand{\probClique}{\textsc{Clique}\xspace}
\newcommand{\probSC}{\textsc{Set Cover}\xspace}

\newcommand{\probMCIS}{\textsc{Multicolored}\ \textsc{Independent Set}\xspace}

\newcommand{\ShiftBribery}{\textsc{Shift Bribery}\xspace}

\newcommand{\SNTV}{SNTV\xspace}
\newcommand{\Bloc}{Bloc\xspace}
\newcommand{\Borda}{Borda\xspace}
\newcommand{\kBorda}{$k$-Borda\xspace}
\newcommand{\ApprovalCC}{Approval-CC\xspace}
\newcommand{\BordaCC}{Borda-CC\xspace}

\newcommand{\GreedyBordaCC}{Greedy-Borda-CC\xspace}
\newcommand{\GreedyApprovalCC}{Greedy-Approval-CC\xspace}
\newcommand{\PTASCC}{PTAS-CC\xspace}

\title{Complexity of Shift Bribery in \\ Committee Elections\footnote{An extended abstract of this paper appeared
under the title ``Complexity of shift bribery in committee elections''
in the \emph{Proceedings of the 30th AAAI Conference on Artificial Intelligence (AAAI~'16)}, pages 2452--2458~\cite{preliminary}.}}
\author{Robert Bredereck$^1$, Piotr Faliszewski$^2$, Rolf Niedermeier$^1$, \\ Nimrod Talmon$^3$\thanks{Most of the work was done while the author was affiliated with TU Berlin.}\\ \ \\
  $^1$TU Berlin, Berlin, Germany\\
   \{robert.bredereck, rolf.niedermeier\}@tu-berlin.de\\
 $^2$AGH University of Science and Technology, Krakow, Poland\\
 faliszew@agh.edu.pl\\
 $^3$Ben-Gurion University, Be'er Sheva, Israel\\
 talmonn@bgu.ac.il}
\date{}

\pdfinfo{
/Title (Complexity of Shift Bribery in Committee Elections)
/Author (Robert Bredereck, Piotr Faliszewski, Rolf Niedermeier, and Nimrod Talmon)}

\pagestyle{plain}

\usepackage{footnote}
\makesavenoteenv{tabular}

\begin{document}

\maketitle

\begin{abstract}
  Given an election, a preferred candidate~$p$, and a budget, the \textsc{Shift Bribery} problem
  asks whether~$p$ can win the election after shifting~$p$ higher in some voters' preference orders.
  Of course, shifting comes at a price (depending on the voter and on the extent of the shift)
  and one must not exceed the given budget.
  We study the (parameterized) computational complexity of \textsc{Shift Bribery}
  for multiwinner voting rules where winning the election means to be part of some winning committee.
  We focus on the well-established SNTV, Bloc, $k$-Borda,
  and Chamberlin-Courant rules, as well as on approximate variants of the
  Chamberlin-Courant rule, since the original rule is $\np$-hard to
  compute. We show that \textsc{Shift Bribery} tends to be harder in
  the multiwinner setting than in the single-winner one by showing
  settings where \textsc{Shift Bribery} is easy in the single-winner
  cases, but is hard (and hard to approximate) in the multiwinner
  ones.  Moreover, we show that the non-monotonicity of those rules
  which are based on approximation algorithms for the
  Chamberlin-Courant rule sometimes affects the complexity of
  \textsc{Shift Bribery}.
\end{abstract}

\section{Introduction}
We study the computational complexity of campaign management---modeled as the
\textsc{Shift Bribery} problem---for the case of multiwinner
elections. In the \textsc{Shift Bribery} problem we want to ensure
that our candidate is in a winning committee by convincing some of the
voters---at a given price---to rank him or her more favorably.  In
particular, this models campaigns based on direct meetings with
voters, in which the campaigner presents positive features of the
candidate he or she works for (see also the works of
Cary~\cite{Car11}, Magrino et al.~\cite{MRSW11}, Xia~\cite{Xia12}, and
Faliszewski at al.~\cite{FST17} for other interpretations of bribery
problems).  While the complexity of campaign management is relatively
well-studied for single-winner elections~\cite{FR15}, it has not been studied for
the multiwinner setting yet (there are, however, studies of
manipulation and control for multiwinner
elections~\cite{MPRZ08,AGGMMW15}).

The goal of a multiwinner election is to pick a committee of $k$
candidates based on the preferences of the voters. These $k$
candidates might, for example, form the country's next parliament, be
a group of people shortlisted for a job opening, or be a set of items
a company offers to its customers (see the overview of Faliszewski et
al.~\cite{fal-sko-sli-tal:b:multiwinner-voting} and the papers of Lu
and Boutilier~\cite{BL11}, Skowron et al.~\cite{SFL16}, and Elkind et
al.~\cite{EFSS14} for a varied description of applications of
multiwinner voting).
Since the election results can affect the voters and the candidates
quite significantly, we expect that they will run campaigns to achieve
the most desirable results: a person running for parliament would
want to promote her or his political platform; a job candidate would want to
convince the HR department of her or his qualities. 

We study the standard, ordinal model of voting, where each voter
ranks the candidates from the one he or she likes best to the one he
or she likes least. 
We focus on rules that are based either on the Borda scores of the
candidates or on their $t$-Approval scores.
Briefly put, if we have $m$ candidates, then a voter gives Borda score $m-1$
to his or her most preferred candidate, score $m-2$ to the next one, and so on;
a voter gives $t$-Approval score $1$ to each of his or her top-$t$ candidates
and score $0$ to the other ones.

The most basic multiwinner rules simply pick $k$ candidates with the
highest scores (for example, SNTV uses $1$-Approval scores, Bloc uses
$k$-Approval scores, and $k$-Borda uses Borda scores). While such
rules may be good for shortlisting tasks, they do not seem to perform
well for cases where the committee needs to be varied (or represent
the voters proportionally; see the overview of Faliszewski et
al.~\cite{fal-sko-sli-tal:b:multiwinner-voting} and the work of Elkind
et al.~\cite{EFSS14}). In this case, we may prefer other rules, such
as those in the Chamberlin-Courant family of rules~\cite{CC83}, which
try to ensure that every voter is represented well by some member of
the committee (see Section~\ref{sec:prelim} for an exact definition),
or the STV rule.

Unfortunately, while the winners of SNTV, Bloc, and $k$-Borda rules
are polynomial-time computable, this is not the case for the
Chamberlin-Courant rules (Procaccia et al.~\cite{PRZ08} and Lu
and Boutilier~\cite{BL11} show $\np$-hardness). We deal with this
problem in two ways.  First, there are exact FPT algorithms for computing
Chamberlin-Courant winners (for example, for the case of few
voters). Second, there are good polynomial-time approximation
algorithms 
(due to Lu and Boutilier~\cite{BL11} and Skowron et
al.~\cite{SFS15}). Following 
Caragiannis et al.~\cite{CKKP14} and Elkind et al.~\cite{EFSS14}, we
consider these approximation algorithms as voting rules in their own
right (
societies may 
use them in place of the original, hard-to-compute ones).

The idea of the \textsc{Shift Bribery} problem is as follows. We are
given an election and a preferred candidate $p$, and we want to ensure
that $p$ is a winner (in our case, is a member of a winning committee)
by shifting him or her forward in some of the votes, at an appropriate
cost, without exceeding a given budget.  The costs of shifting $p$
correspond to investing resources into convincing the voters that our
candidate is of high quality. For example, if a company is choosing
which of its products to continue selling, the manager responsible for
a given product may wish to prepare a demonstration for the company's
higher management. Similarly, a person running for parliament would
invest money into meetings with the voters, appropriate leaflets, and
so on. Thus, we view \textsc{Shift Bribery} as a model of (a type of)
campaign management.
Nonetheless, there are also other appealing interpretations of the
bribery problems. For example, Cary~\cite{Car11}, Magrino et
al.~\cite{MRSW11}, and Xia~\cite{Xia12} studied the margin of victory
problem (which is a form of destructive bribery), where the goal is to
ensure that a given candidate does not win by changing as few votes as
possible. The fewer votes need to be changed, the more likely it is
that a given election was tampered with. Similarly, Faliszewski et
al.~\cite{FST17} suggested that the amount of bribery needed to ensure
that a given candidate wins is a good measure of how well this
candidate performed in the election; the fewer changes in the votes
are necessary, the closer a given candidate was to victory (indeed,
Faliszewski et al.~\cite{FST17} argue that this measure might be more
appealing than the candidates' scores). This measure-of-success
interpretation applies to our work as well.

\textsc{Shift Bribery} was introduced by Elkind et
al.~\cite{EFS09,EF10}, and since then a number of other researchers
studied both \textsc{Shift Bribery}~(e.g., Schlotter et
al.~\cite{EFS17}, Bredereck et al.~\cite{BCFNN16,BFNT16}, Kaczmarczyk
and Faliszewski~\cite{kac-fal:c:dsb} and Maushagen et
al.~\cite{MNRS18}), and related campaign management problems (e.g.,
Dorn and Schlotter~\cite{DS12}, Baumeister et al.~\cite{BFLR12},
Faliszewski et al.~\cite{FRRS15}, and Knop et al.~\cite{KMM17}).
Naturally, the problem also resembles other bribery problems, such as
the original bribery problem of Faliszewski et al.~\cite{FHH09} or
those studied by Mattei et al.~\cite{MPRV12} and Mattei, Goldsmith,
and Klapper~\cite{MGK12}. We point the reader to the overview of
Faliszewski and Rothe~\cite{FR15} for more details and references.

For single-winner elections, \textsc{Shift Bribery} is a relatively easy problem.
Specifically, it is polynomial-time solvable for the $t$-Approval rules.
For the Borda rule, for which it is $\np$-hard, there is a good polynomial-time approximation
algorithm~\cite{EF10} and there are exact $\fpt$ algorithms~\cite{BCFNN16}.
In the multiwinner setting the situation is quite different.
The main findings of our research are as follows (see also \autoref{table:results} in \autoref{sec:shift-bribery}):

\begin{enumerate}
\item The computational complexity of \textsc{Shift Bribery} for multiwinner rules
  strongly depends on the setting. In general, for the cases of few
  candidates we find exact $\fpt$ algorithms while for the cases where the
  preferred candidate is shifted by few positions only we find
  hardness results (even though these cases are often easy in the
  single-winner setting). 

\item The computational complexity for the case of few voters most
  strongly depends on the underlying scoring rule. Generally, for the
  rules based on $t$-Approval scores the complexity of \textsc{Shift
    Bribery} tends to be lower than for analogous rules based on
  Borda scores.
  
\end{enumerate}

\noindent
We did not study multiwinner rules such as the STV rule, the Monroe
rule~\cite{Mon95}, or the rules for the approval elections (see, e.g., the works
of Brams and Kilgour~\cite{BK14}, Aziz et
al.~\cite{ABCEFW15,AGGMMW15}, and Lackner and
Skowron~\cite{lac-sko:c:approval-thiele}), in order to keep our set of
rules small, while being able to compare our results to those for the
single-winner setting (however, we mention that Maushagen et
al.~\cite{MNRS18} considered \textsc{Shift Bribery} for round-based
rules, including STV, and Faliszewski et al.~\cite{FST17} considered
problems analogous to \textsc{Shift Bribery} for the case of
approval-based multiwinner rules).

\section{Preliminaries}\label{sec:prelim}

\noindent\textbf{Elections and Voting Rules.}\quad For each integer $n$, we set
$[n] := \{1, \ldots, n\}$.  An election~$E = (C,V)$ consists of a set
of candidates $C = \{c_1, \ldots, c_m\}$ and a collection of voters $V
= (v_1, \ldots, v_n)$. Each voter $v$ is associated with a preference
order, i.e., with a ranking of the candidates in decreasing order
of appreciation by the voter.  For example, if $C = \{c_1, c_2, c_3\}$,
then by writing $v \colon c_1 \pref c_2 \pref c_3$ we mean that
$v$~likes~$c_1$ best, then~$c_2$, and then~$c_3$. We write~$\pos_v(c)$
to denote the position of candidate~$c$ in voter~$v$'s preference order
(e.g., in the preceding example we would have~$\pos_v(c_1) = 1$).
When we write a subset~$A \subseteq C$ of candidates in a description
of a preference order, we mean listing all members of~$A$ in some
fixed, easily computable order. If we put~$\revnot{A}$ in a preference
order, then we mean listing members of~$A$ in the reverse of this fixed
order.

Let $E = (C,V)$ be an election with $m$~candidates and $n$~voters.
The Borda score of candidate~$c$ in the vote of~$v$, $v \in V$, is
$\beta_v(c) = m-\pos_v(c)$.  The Borda score of~$c$ in the election~$E$ is
$\beta_E(c) = \sum_{v \in V}\beta_v(c)$. The single-winner Borda rule
elects the candidate with the highest Borda score (if there are
several such candidates, they tie as winners).
For each $t \in [m]$, we define the $t$-Approval score as follows: for
a candidate~$c$ and voter~$v$, $\alpha^t_v(c) = 1$ if $v$~ranks~$c$
among the top~$t$ positions and otherwise it is $0$; we set $\alpha^t_E(c)
= \sum_{v \in V}\alpha^t_v(c)$. We define the single-winner
$t$-Approval rule analogously to the Borda rule.

A multiwinner voting rule~$\calR$ is a function that, given an
election~$E=(C,V)$ and an integer $k\in [|C|]$, outputs a set
$\calR(E,k)$ of $k$-element subsets of~$C$. Each size-$k$ subset of
$C$ is called a \emph{committee} and each member of $\calR(E,k)$ is 
called a \emph{winning committee}. 

The most natural task that arises when considering (multiwinner) voting
rules is the task of deciding whether a given candidate is among the winners
(resp.\ is part of some winning committee).
We will refer to this task as the \textsc{Winner Determination} problem.
Sometimes, winner determination procedures studied in the literature
consider slightly different goals (e.g.\ computing the score of a winning committee).
However, all polynomial-time, $\fpt$, and $\xp$ winner determination procedures
for the rules we study in this paper can be modified to 
solve \textsc{Winner Determination}.

We consider the following 
rules
(below, $E = (C,V)$ is an election and $k$ is the
committee size):

 \begin{enumerate}
 \item 
  \emph{\SNTV}, \emph{\Bloc}, and \emph{\kBorda} compute the score of each candidate
  and output the committee of $k$~candidates with the highest scores
  (or all such committees, if there are several).  \SNTV and
  \Bloc use, respectively, $1$-Approval and $k$-Approval scores, while \kBorda uses
  Borda scores.
  For these rules winners can be computed in polynomial time.\footnote{
    There may be exponentially many winning committees, but it is
    easy to compute their score and to check 
    for a subset of candidates if it can be extended to a winning committee.}

 \item
  Under the \emph{Chamberlin-Courant rules} (the CC rules), for a
  committee $S$, a candidate~$c \in S$ is a representative of those
  voters that rank $c$ highest among the members of~$S$.  The
  score of a committee is the sum of the scores that the voters give
  to their representatives (highest-scoring committees
  win); \emph{Borda-CC} uses Borda scores, \emph{$t$-Approval-CC}
  uses $t$-Approval scores.
  \textsc{Winner Determination} for CC rules is $\np$-hard~\cite{PRZ08,BL11},
  but is in~$\fpt$ when parameterized by the number of voters or
  candidates~\cite{BSU13}.

 \item
  \emph{Greedy-Borda-CC} is a $(1-\frac{1}{e})$-approximation
  algorithm for the Borda-CC rule, due to Lu and
  Boutilier~\cite{BL11}.  (The approximation is in the sense that
  the score of the committee output by the algorithm is at least a
  $1-\frac{1}{e}$ fraction of the score of the winning committee
  under Borda-CC.)  The algorithm starts with an empty set~$W$ and
  executes $k$~iterations, in each one adding to~$W$ the candidate~$c$
  that maximizes the Borda-CC score of~$W \cup \{c\}$.\footnote{If
    there is a tie between several candidates, then we assume that the
    algorithm breaks it according to a prespecified
    order. 
  }
  For example, it always picks a Borda winner in the first iteration.
  Greedy-Borda-CC always outputs a unique winning committee.

\item  \emph{Greedy-Approval-CC} works in the same way as Greedy-Borda-CC, but
  uses $t$-Approval scores instead of Borda scores.
  It is a $(1-\frac{1}{e})$-approximation
  algorithm  for  $t$-Approval-CC.
We refer to $t$-Approval-Greedy-CC for $t = \lceil \frac{m
 \cdot \mathrm{w}(k)}{k}\rceil$ (where $\mathrm{w}$ is Lambert's W
function; $\mathrm{w}(k)$ is $O(\log k)$) as \emph{PTAS-CC}; it is the main
part of Skowron et al.'s~\cite{SFS15} polynomial-time
approximation scheme for Borda-CC.
 \end{enumerate}

\noindent\textbf{Parameterized Complexity.}\quad In a parameterized problem, we
declare some part of the input (e.g., the number of voters) as the
\emph{parameter}. A parameterized problem is \emph{fixed-parameter tractable}
(is in $\fpt$) if there is an algorithm that solves it in
$f(\rho) \cdot |I|^{O(1)}$ time, where $|I|$ is the size of a given
instance encoding, $\rho$ is the value of the parameter, and $f$ is
some computable function. There is a hierarchy of classes of hard
parameterized problems,
$\fpt \subseteq \wone \subseteq \wtwo \subseteq \cdots \subseteq \xp$.
It is widely believed that if a problem is hard for one of the
$\mathrm{W[\cdot]}$ classes, then it is not in $\fpt$. The notions of
hardness and completeness for parameterized classes are defined
through \emph{parameterized reductions}. For this paper, it suffices to use
standard polynomial-time many-one reductions that guarantee that the
value of the parameter in the problem we reduce to exclusively depends
on the value of the parameter of the problem we reduce from.

If a parameterized problem can be solved in polynomial time under the
assumption that the parameter is constant, then we say that it is in~$\xp$.
Recall that membership in $\fpt$ additionally requires that the degree
of the polynomial is a constant independent from the parameter.
If a problem is $\np$-hard even for some constant value of the parameter,
then we say that it is para-$\np$-hard.

For details on parameterized complexity, we point to
the books of Cygan et al.~\cite{CFKLMPPS15},
Downey and Fellows~\cite{DF13}, Flum and Grohe~\cite{FG06},
and Niedermeier~\cite{Nie06}.

\section{Shift Bribery}\label{sec:shift-bribery}

Let $\calR$ be a multiwinner rule. In the
$\calR$-\textsc{Shift Bribery} problem we are given an election $E =
(C,V)$ with $m$~candidates and $n$~voters, a preferred candidate~$p$,
a committee size~$k$, voter price functions (see below), and an
integer~$B$, the budget. The goal is to ensure that $p$ belongs to at
least one winning committee (according to the
rule~$\calR$),\footnote{Our approach 
  is a natural extension of the non-unique winner model from the world of
  single-winner rules.
  Naturally, one might alternatively require that $p$ is a
  member of all winning committees or put an even more demanding goal
  that would involve other candidates.}
and to achieve this goal we are allowed to
shift~$p$ forward in the preference orders of the voters. However,
each voter~$v$ has a price function $\pi_v \colon [m] 
\rightarrow \naturals$, and if we shift~$p$ by $i$~positions forward
in the vote of~$v$, then we have to pay $\pi_v(i)$. We assume that the
price functions are nondecreasing (i.e., it cannot cost less to shift
our candidate farther than to shift her or him nearer) and that the
cost of not shifting~$p$ is zero (i.e., $\pi_v(0) = 0$ for each~$v$).
Bredereck et al.~\cite{BCFNN16} have considered several different
families of price functions. In this paper we focus on two of them:
unit price functions, where for each voter~$v$ it holds that $\pi_v(i)
= i$, and all-or-nothing price functions, where for each voter~$v$ it
holds that $\pi_v(i) = q_v$ for each $i > 0$ (where $q_v$ is some
voter-dependent value) and $\pi_v(0) = 0$.

A shift action is a vector $(s_1, \ldots, s_n)$ of natural numbers
that for each voter specify by how many positions to shift $p$.  If
$\vec{s} = (s_1, \ldots, s_n)$ is a shift action, then we write
$\shift(E,\vec{s})$ to denote the election obtained from~$E$ by
shifting~$p$ an appropriate number of positions forward in each
vote. If $\Pi = (\pi_1, \ldots, \pi_n)$ are the price functions of the
$n$~voters, then we write $\Pi(\vec{s}) = \sum_{i=1}^n\pi_i(s_i)$ to
denote the total cost of applying $\vec{s}$. For a shift action
$\vec{s}$, we define $\#\vec{s} = \sum_{i=1}^n s_i$ and 
we call it the number of unit shifts in~$\vec{s}$. 

Formally, we define $\calR$-\textsc{Shift Bribery} as follows.
\begin{definition}
  Let $\calR$ be a multiwinner voting rule.
  An instance~$I$ of $\calR$-\textsc{Shift Bribery} consists of an election $E = (C,V)$,
  a preferred candidate $p \in C$, a committee size~$k$, a collection
  $\Pi = (\pi_1, \ldots, \pi_n)$ of price functions for the
  voters, and an integer $B$, the budget.
  We ask whether there is a
  shift action $\vec{s} = (s_1, \ldots, s_n)$ such that:
  \begin{enumerate}
   \item $\Pi(\vec{s}) \leq B$, and
   \item there is a committee $W \in
         \calR(\shift(E,\vec{s}),k)$ such that $p \in W$.
  \end{enumerate}
  We refer to such a shift action as a \emph{successful shift action};
  we write $\opt(I)$ to denote the cost of the least expensive successful
  shift action.
\end{definition}

Following Bredereck et al.~\cite{BCFNN16}, we consider
the most natural parameterizations
by the number~$n$ of the voters, by the number~$m$ of the candidates, 
and by the minimum number~$s$ of unit shifts in a successful shift action.
We summarize our results, as well as some previously known ones, in
\autoref{table:results}.
The reminder of this paper is structured as follows.  In
\autoref{sec:general_results}, we present findings applying to the
multiwinner context as a whole.  In \autoref{sec:topk-results}, we
present specific results for the voting rules SNTV, Bloc, and
$k$-Borda.  In \autoref{sec:CC-results}, we present our results for
Chamberlin-Courant rules and their approximate variants.  We conclude
in \autoref{sec:conclusion} with a discussion and an outlook.

\begin{table*}[t]
  {\centering
\medskip
\resizebox{\linewidth}{!}{
\begin{tabular}{l | c || c || c | c | c |}
 \hline
  \multicolumn{2}{|c||}{\multirow{2}{*}{voting rule $\calR$}} & \highcell $\calR$-\textsc{Winner} & \multicolumn{3}{c|}{$\calR$-\ShiftBribery} \\
 \cline{4-6}
  \multicolumn{2}{|c||}{}                                     & \highcell \textsc{Determination}                 & \#candidates ($m$)                         & \#voters ($n$)                                     & \#shifts ($s$) \\
 \hline
 \multicolumn{1}{l}{}\multirow{5}{*}{\tabrotate{\rotfix \specialcell{single\\winner}}} & \multicolumn{5}{l}{}\\
 \cline{2-6}
 \highcell & $t$-Approval              & \multirow{3}{*}{\p{}$^\bigstar$}  & \multicolumn{3}{c|}{\p{}$^\triangledown$} \\
 \cline{2-2} \cline{4-6}    
 \highcell & \multirow{2}{*}{Borda}    &                                     & \multirow{2}{*}{$\fpt${}$^\diamondsuit$} & \fpt(0/1-pr.){}, \fpt-AS{}$^\diamondsuit$, & \multirow{2}{*}{$\fpt$$^\diamondsuit$} \\
  &                                          &                                     &                                            & and $\w[1]$-h (Thm.~\ref{borda_wrt_n})   & ~ \\
 \cline{2-6}
 \multicolumn{6}{l}{}\\
 \cline{2-6}
 \multirow{8}{*}{\tabrotate{\rotfix \specialcell{\multirow{2}{*}{multi winner}\\}}}
 \highcell & SNTV                       &  \multirow{4}{*}{\p{}$^\bigstar$} & \multicolumn{3}{c|}{\multirow{2}{*}{\p{} (Prop.~\ref{prop:sntvbloc})}}  \\
 \cline{2-2}   
 \highcell & Bloc                      &           ~                          & \multicolumn{3}{c|}{\multirow{2}{*}{~}}  \\
 \cline{2-2} \cline{4-6}
 \highcell & $k$-Borda &           ~                          & \multirow{8}{*}{$\fpt$ (Thm.~\ref{thm:m})} & \fpt(0/1-pr.) (Prop.~\ref{prop:all-or-nothing-n}), & \wone-h (Thm.~\ref{kborda_wrt_s})                     \\
 \cline{2-3} \cline{6-6}
 \highcell & \multirow{2}{*}{Borda-CC} & $\np$-h$^\spadesuit$,              &       ~                                    & \fpt-AS (Prop.~\ref{prop:fpt-as}), and               & \multirow{3}{*}{\para-\np-h$^\spadesuit$}           \\
  & ~                         & $\fpt(n)$$^\heartsuit$, and        &       ~                                    & $\w[1]$-h (Cor.~\ref{cor:kBorda-n}+Cor.~\ref{cor:BordaCC-n})                 & ~                                                     \\
 \cline{2-2} \cline{5-5}
 \highcell & Approval-CC               & $\fpt(m)$$^\heartsuit$             &       ~                                    & \multirow{3}{*}{\fpt{} (Prop.~\ref{prop:greedy-approval-cc-n})} & ~                                          \\
 \cline{2-3} \cline{6-6}
 \highcell & Greedy-Approval-CC        & \multirow{3}{*}{\p{}$^\bigstar$}   &       ~                                    & ~                                                  & \multirow{3}{*}{\wtwo-h (Thm.~\ref{W2h-unitshifts})}  \\
 \cline{2-2}   
 \highcell & PTAS-CC                   &           ~                          &       ~                                    & ~                                                  & ~                                                     \\
 \cline{2-2} \cline{5-5}
 \highcell & Greedy-Borda-CC &      ~                          &       ~                                    & $\w[1]$-h (Cor.~\ref{cor:BordaCC-n})                 & ~                                                     \\
 \cline{2-6}
\end{tabular}
}
}

  \caption{
    Overview of our complexity results for the \ShiftBribery problem
    (for reference, we also mention the complexity of the \textsc{Winner Determination} problem).
    The results in each cell apply to all voting rules listed in the leftmost column which span the height of the cell.
    All results are for the case of unit price functions, with the exceptions of those marked as \fpt(0/1-pr.), which are for all-or-nothing
    price functions (many other results extend to other price functions, but we do not list them here).
    \fpt-AS stands for $\fpt$ approximation scheme (see \autoref{prop:fpt-as}).
    Note that all variants which are $\w$[$\cdot$]-hard are also in~$\xp$.
    Results marked
    by $\triangledown$ follow from the work of Elkind et al.~\cite{EFS09},
    by $\diamondsuit$ follow from the work of Bredereck et al.~\cite{BCFNN16},
    by $\spadesuit$ follow from the works of Procaccia et al.~\cite{PRZ08} and Lu and Boutilier~\cite{BL11},
    by $\heartsuit$ follow from the work of Betzler et al.~\cite{BSU13},
    and by $\bigstar$ are folklore results. 
  }\label{table:results}

\end{table*}

\section{General Results}
\label{sec:general_results}

We start our discussion by providing several results that either apply
to whole classes of multiwinner rules (including many of those that we
focus on) or that are proven using general, easily adaptable techniques.
These results form a baseline for our research regarding specific
rules.

First, we note that for each of the rules that we study, \textsc{Shift
  Bribery} with unit price functions is in $\fpt$ when parameterized
by the number of candidates. This result follows by applying the
standard technique of modeling the problem through an integer linear
program and invoking Lenstra's theorem~\cite{Len83}. We conjecture that,
using the MILP technique of Bredereck et al.~\cite{BFNST15}, or the
more general toolbox of $n$-fold integer
programming~\cite{hem-onn-rom:j:n-fold-ip} (see the work of Knop et
al.~\cite{KMM17} for an application of $n$-fold IPs regarding other
bribery problems), it is also possible to generalize this result to
all-or-nothing price functions (or even to general price functions).

Note that the following theorem does not mention SNTV and Bloc since,
as we will see in the next section, for them the problem is even
polynomial-time solvable.

\newcommand{\thmm}{Parameterized by the number of
  candidates, \textsc{Shift Bribery} with unit prices is in $\fpt$ for
  $k$-Borda, Approval-CC, Borda-CC, Greedy-Approval-CC, PTAS-CC, and
  Greedy-Borda-CC.}

\begin{theorem}\label{thm:m}
  \thmm
\end{theorem}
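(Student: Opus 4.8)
The plan is to model each instance as an integer linear program (ILP) whose number of variables and constraints is bounded by a function of the number $m$ of candidates, and then to invoke Lenstra's theorem~\cite{Len83}, which solves such ILPs in $\fpt$ time with respect to the number of variables. The starting observation is that, under unit prices, shifting $p$ by $i$ positions costs exactly $i$ in every vote, so two voters with identical preference orders are fully interchangeable: they incur the same cost and have the same effect on all scores. I therefore partition the voters into at most $m!$ \emph{types}, one per distinct preference order, writing $n_j$ for the number of voters of type $j$. For every type $j$ and every shift amount $\ell \in \{0,\dots,m-1\}$ I introduce an integer variable $x_{j,\ell} \ge 0$ counting how many type-$j$ voters have $p$ shifted forward by exactly $\ell$ positions; this yields at most $m! \cdot m$ variables, a function of $m$ alone. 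The type-consistency constraints $\sum_{\ell} x_{j,\ell} = n_j$ and the budget constraint $\sum_{j,\ell} \ell \cdot x_{j,\ell} \le B$ are linear.

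The crucial structural fact is that after applying a shift profile described by the $x_{j,\ell}$, the (Borda or $t$-Approval) score of every candidate is a linear function of the variables. Indeed, shifting $p$ forward by $\ell$ in a type-$j$ vote moves $p$ and pushes back by one position each of the $\ell$ candidates it jumps over; which candidates these are, and how their scores change, depends only on the pair $(j,\ell)$ and can be precomputed. Consequently, the final score of any candidate, and more generally the representation score that any fixed committee draws from a type-$j$ voter shifted by $\ell$, is a precomputed constant times $x_{j,\ell}$, summed over types and shifts, hence a linear expression. Since all these expressions are integer-valued, I can freely rewrite any strict inequality $a > b$ as $a \ge b+1$.

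It remains to encode, for each rule, that $p$ lies in a winning committee; I do this by enumerating a small combinatorial structure and adding the corresponding linear constraints. For \kBorda, $p$ belongs to some winning committee iff at most $k-1$ candidates strictly beat $p$, equivalently at least $m-k$ of the other candidates have score at most that of $p$; I guess which set of $m-k$ rivals $p$ must weakly beat (at most $2^m$ guesses) and impose the corresponding linear inequalities. For Borda-CC and Approval-CC, I guess a committee $W \ni p$ (at most $\binom{m}{k}$ guesses) and add, for every committee $W'$, the linear constraint $\score(W) \ge \score(W')$; feasibility for some $W$ certifies that $p$ sits in a winning committee. For the sequential rules Greedy-Borda-CC, Greedy-Approval-CC, and PTAS-CC, the winner is the single committee built greedily, so I instead guess the ordered sequence $(c_1,\dots,c_k)$ of picked candidates with $p$ among them (at most $m^m$ guesses) and, for each step $i$, add linear constraints stating that $c_i$ maximizes the marginal representation gain over $\{c_1,\dots,c_{i-1}\}$, using $a \ge b+1$ versus $a \ge b$ according to whether a rival precedes or follows $c_i$ in the prescribed tie-breaking order.

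Finally, for each rule the number of guessed structures and the number of ILP variables and constraints are all bounded by functions of $m$, so solving every resulting ILP with Lenstra's algorithm takes $f(m) \cdot \poly(|I|)$ time, and the instance is accepted iff one of the ILPs is feasible. I expect the main obstacle to be the encoding of the winner condition for the non-monotone greedy rules: since their outcome is defined procedurally and is sensitive to tie-breaking, I must argue that guessing the selection sequence together with the per-step maximality and tie-break inequalities captures the greedy execution \emph{exactly}, in contrast to the clean, purely score-based conditions available for \kBorda and for the exact Chamberlin-Courant rules.
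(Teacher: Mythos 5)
Your proposal is correct and follows essentially the same route as the paper: guess the winning committee (and, for the greedy rules, the order in which its members are selected), set up an ILP over voter-type variables whose number is bounded by a function of $m$, express all scores as linear functions of these variables, and invoke Lenstra's theorem; your $x_{j,\ell}$ variables are just a reindexing of the paper's $S_{i,j}$. Your explicit handling of tie-breaking in the per-round maximality constraints (strict versus non-strict inequalities depending on the tie-breaking order) is in fact slightly more careful than the paper's write-up, which states the constraints only with $\geq$.
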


In order to prove \autoref{thm:m}, we introduce an algorithmic scheme similar
to that of \citet{DS12} for single-winner \textsc{Swap Bribery}.
We will make use of the fact that integer linear programs (ILPs)
can be solved in FPT time with respect to the number of (integer) variables
(following a famous result by \citet{Len83} which was later improved
by \citet{Kan87} and by \citet{FT87}).
We first introduce the scheme and the basic ILP formulation.
Then, we show how to extend the ILP so that the algorithmic scheme works
for \kBorda (by proving \autoref{lem:m:kBorda}),
for \ApprovalCC and \BordaCC (by proving \autoref{lem:m:CCs}),
and for \GreedyApprovalCC, \PTASCC, and \GreedyBordaCC (by proving \autoref{lem:m:Greedys}).

The idea of the scheme is to guess the members of the winning
committee $W \subseteq C$, $|W|=k$, $p \in W$, and to verify the guess
by solving an ILP.  More precisely, we try all possible winning
committees in the outer loop of our algorithm and call the
corresponding ILP for each of the (less than~$2^m$) potential winning
committees that contain~$p$.  For the round-based rules
(\GreedyApprovalCC, \PTASCC, and \GreedyBordaCC) we furthermore guess
a function $w: [k] \rightarrow W$ mapping each ``position'' in the
committee to a specific candidate from~$W$.  This allows us to specify
in which round each member joined the committee according to the
round-based rules and can be realized with an additional factor of
$k!\le m!$ in the running time.  For each $j \in [k]$, let~$W^j$
denote the set of the first~$j$ members according to the function~$w$,
that is, $W^j=\{w(j') \mid 1\le j' \le j\}$.

There are $m!$ different preference orders and, by ordering them
arbitrarily, we can speak of the $i$-th preference order for
each $i \in [m!]$. Below we describe the main components of our ILPs.

For each $i \in [m!]$ and $j \in [m!]$, we introduce an integer
variable $S_{i,j}$ which represents the number of voters who
originally have the $i$-th preference order, but who will have
the $j$-th one after the bribery.
We add the following constraints for each $i \in [m!]$, ensuring that
each original vote is turned into exactly one bribed vote (by $n_i$ we
mean the number of voters who, prior to the bribery, have the $i$-th
preference order):
  \[
  \sum_{j \in [m!]} S_{i,j} = n_i.
  \]
  Then, we add the following constraint, ensuring that the cost of our
  bribery action does not exceed the budget $B$ (by $\textrm{cost}(i,j)$
  we mean the number of unit shifts necessary to transform the $i$-th
  preference order into the $j$-th one, or $B+1$ if such a
  transformation is impossible):
  \[
  \sum_{i \in [m!], j \in [m!]} S_{i,j} \cdot \textrm{cost}(i,j) \leq
  B.
  \]
  For each $j \in [m!]$, we introduce an integer variable $N_{j}$
  which represents the number of voters who have the $j$-th
  preference order after the bribery. To ensure that these variable
  have the correct values, for each $j \in [m!]$ we introduce the following constraint:
  \[
    N_{j} = \sum_{i \in [m!]} S_{i,j}. 
  \]
This describes the basic ILP which will be extended in
the proofs of the following lemmas.

\begin{lemma}\label{lem:m:kBorda}
  Parameterized by the number $m$ of candidates,
  \kBorda \ShiftBribery is in FPT.
\end{lemma}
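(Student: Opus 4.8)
The plan is to extend the basic ILP so that, for a guessed committee~$W$ with $p \in W$, the resulting program is feasible exactly when one can, within the budget, turn the election into one in which $W$ is a winning committee under \kBorda. First I would record the following characterization: under \kBorda a size-$k$ set~$W$ is a winning committee of an election~$E'$ if and only if $\min_{c \in W}\beta_{E'}(c) \ge \max_{d \in C \setminus W}\beta_{E'}(d)$, i.e., every committee member has Borda score at least as large as every non-member. Since the non-unique-winner model lets us break ties in favour of~$W$, this condition is both necessary and sufficient; and because we only enumerate committees containing~$p$, finding one feasible~$W$ is equivalent to the existence of a successful shift action.

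The key observation is that Borda scores are linear in the count variables~$N_i$ already present in the basic ILP. For a preference order $i \in [m!]$ and a candidate~$c$, let $\beta_i(c) = m - \pos_i(c)$ denote the (constant) Borda score that a voter casting the $i$th order assigns to~$c$. Then the Borda score of~$c$ in the bribed election is $\sum_{i \in [m!]} N_i \cdot \beta_i(c)$, a linear expression in the integer variables. For the guessed committee~$W$ I would therefore add, for every pair $c \in W$ and $d \in C \setminus W$, the linear constraint $\sum_{i \in [m!]} N_i \cdot \beta_i(c) \ge \sum_{i \in [m!]} N_i \cdot \beta_i(d)$. The budget constraint of the basic ILP---together with the convention that $\mathrm{cost}(i,j) = B+1$ whenever order~$i$ cannot be turned into order~$j$ by shifting only~$p$---already guarantees that any feasible solution corresponds to an admissible, affordable shift action, so no further constraints are needed.

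This extension introduces only fewer than $m^2$ additional linear constraints and no new variables, so the program still has $(m!)^2 + m!$ integer variables, a number depending only on~$m$, while all coefficients stay polynomially bounded in~$|I|$. By the results of \citet{Len83}, improved by \citet{Kan87} and by \citet{FT87}, such an ILP can be solved in time $f(m)\cdot|I|^{O(1)}$; iterating over the fewer than $2^m$ committees~$W$ that contain~$p$ then keeps the whole procedure in \fpt{} with respect to~$m$. The only real work is the first step---recognizing that verifying a \kBorda winner reduces to the linear $\min \ge \max$ test and that this test is expressible over the~$N_i$; once the scores are seen to be linear, no genuine obstacle remains. This is in contrast with the Chamberlin-Courant rules, whose representative structure is not directly linear and will require auxiliary variables and constraints (treated in \autoref{lem:m:CCs}).
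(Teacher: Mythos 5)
Your proposal is correct and follows essentially the same route as the paper: guess the committee $W\ni p$, express Borda scores as linear functions of the $N_i$ variables of the basic ILP, add score-comparison constraints, and invoke Lenstra. The only (immaterial) difference is that the paper adds constraints only of the form $\mathrm{score}(p)\ge\mathrm{score}(c)$ for $c\notin W$, whereas you compare every member of $W$ against every non-member; both variants are sound once one enumerates all committees containing~$p$.
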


\begin{proof}
  To ensure that~$p$ is a member of the \kBorda winning committee,
  we have to guarantee that only the other members of the winning
  committee may have larger Borda scores than~$p$.
  Hence, for each candidate $c \notin W$, we add the following
  constraint to the base ILP (by $\beta_i(c)$ we mean the Borda score
  that candidate~$c$ receives in the $i$-th preference order):
  \[
  \sum_{i \in [m!]} N_{i} \cdot \beta_i(p) \geq \sum_{i \in
    [m!]} N_{i} \cdot \beta_i(c).
  \]
  This finishes the description of the extended ILP.
\end{proof}

\begin{lemma}\label{lem:m:CCs}
  Parameterized by the number~$m$ of candidates, both \ApprovalCC
  \ShiftBribery and \BordaCC \ShiftBribery are in $\fpt$.
\end{lemma}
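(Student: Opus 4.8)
The plan is to extend the basic ILP in the same spirit as the proof of \autoref{lem:m:kBorda}: we keep the outer guess of the winning committee $W$ (with $p \in W$), together with the variables $S_{i,j}$ and $N_i$, and we only need to add constraints guaranteeing that, in the bribed election, $W$ is indeed a winning committee under the relevant CC rule. The score function is the only place where \BordaCC and \ApprovalCC will differ.

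The observation that makes this work is that, once a committee $S$ and a voter's preference order are fixed, that voter's representative in $S$---the top-ranked member of $S$ in the order---is fully determined, and hence so is the satisfaction the voter derives. Concretely, for each $i \in [m!]$ and each size-$k$ committee $S$, let $\sigma_i(S)$ denote the score a voter with the $i$th preference order assigns to its representative in $S$ (the Borda score of that representative for \BordaCC, its $t$-Approval score for \ApprovalCC). Each such $\sigma_i(S)$ is a constant computable in FPT time (there are at most $m! \cdot 2^m$ pairs, each handled in $\poly(m)$ time). Consequently the CC-score of $S$ in the bribed election is the linear function $\sum_{i \in [m!]} N_i \cdot \sigma_i(S)$ of the $N_i$ variables. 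It then suffices to express that $W$ beats every competitor: since there are fewer than $2^m$ committees of size $k$, we add, for each committee $S$ with $|S| = k$, the constraint
\[
\sum_{i \in [m!]} N_i \cdot \sigma_i(W) \;\geq\; \sum_{i \in [m!]} N_i \cdot \sigma_i(S),
\]
which guarantees that the CC-score of $W$ is maximal, so that $W$ is a winning committee containing $p$. This finishes the description of the extended ILP.

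The whole construction stays within FPT: the number of integer variables is at most $(m!)^2 + m!$, and the number of constraints is at most $2 \cdot m! + 2^m + 1$, both bounded by functions of $m$ alone, so Lenstra's theorem solves each ILP in FPT time with respect to $m$; the outer loop over the $< 2^m$ candidate committees (and, for round-based rules, the $k!$ orderings, though those do not arise here) contributes only a further $f(m)$ factor. The only real subtlety---and the reason parameterization by $m$ helps despite \textsc{Winner Determination} for CC being $\np$-hard---is that we never compute the optimal committee explicitly; instead we certify optimality of the guessed $W$ through the inequalities above, exploiting that both the number of distinct preference orders and the number of rival committees are bounded once $m$ is fixed.
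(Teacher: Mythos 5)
Your proposal is correct and follows essentially the same route as the paper's own proof: guess the winning committee $W \ni p$, observe that for each fixed preference order and committee the representative (and hence the voter's contribution to the committee score) is a constant, and add one linear constraint per rival committee forcing $W$ to have maximal CC-score, all within an ILP whose size is bounded by a function of $m$ and solved via Lenstra's theorem. Your $\sigma_i(S)$ is exactly the paper's $\phi(i,S)$, so there is nothing further to compare.
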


\begin{proof}
  To ensure that~$p$ is a member of the \ApprovalCC (respectively,
  \BordaCC) winning committee~$W$, we have to guarantee that no other
  committee has a larger Approval score (respectively, Borda score)
  than our guessed committee~$W$.
  Hence, for each other committee $W'$, we add the following
  constraint to the base ILP, ensuring that in the bribed election,
  the score of~$W$ (based on the $N_{i}$~variables) is at least as
  high as the score of $W'$ (by $\phi(i,X)$ we mean the score
  assigned by a voter with the $i$-th preference order to committee
  $X$):
  \[
    \sum_{i \in [m!]} \phi(i,W) \cdot N_i \geq \sum_{i \in [m!]} \phi(i,W') \cdot N_i.
  \]
  Note that the values $\phi(i,W)$ and $\phi(i,W')$ can be precomputed in
  polynomial time and are constants from the point of view of the ILP.
  This finishes the description of the extended ILP.
\end{proof}

\begin{lemma}\label{lem:m:Greedys}
  Parameterized by the number~$m$ of candidates, \ShiftBribery is in
  $\fpt$ for \GreedyApprovalCC, \PTASCC, and \GreedyBordaCC.
\end{lemma}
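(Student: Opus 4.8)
The plan is to extend the basic ILP, now equipped with the guessed committee $W$ (with $p \in W$) and the guessed ordering function $w \colon [k] \to W$, so that it correctly captures the round-based behavior of the three greedy/PTAS rules. The key observation is that for these rules it is not enough to require that $W$ beats every other committee; instead, we must encode that $W$ is exactly the committee that the greedy procedure constructs, round by round, when it adds the candidates in the order specified by $w$. That is, for each round $j \in [k]$, the candidate $w(j)$ must be the one maximizing the marginal score gain over the partial committee $W^{j-1}$, beating every candidate $c \notin W^{j-1}$ (including candidates outside $W$ entirely). So the main structural difference from \autoref{lem:m:CCs} is that we replace the single ``global'' optimality comparison by a sequence of $k$ greedy-step comparisons.

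First I would, for each round $j \in [k]$ and each candidate $c \in C \setminus W^{j-1}$ with $c \neq w(j)$, add a constraint asserting that the marginal gain of adding $w(j)$ to $W^{j-1}$ is at least the marginal gain of adding $c$ to $W^{j-1}$. Concretely, writing $\Delta(i, X, d)$ for the additional score that a voter with the $i$th preference order contributes when candidate $d$ is added to committee $X$ (i.e.\ the score of $X \cup \{d\}$ minus the score of $X$ for that single voter, under the relevant underlying scoring rule), the constraint for round $j$ and rival $c$ is
\[
  \sum_{i \in [m!]} N_i \cdot \Delta(i, W^{j-1}, w(j)) \;\geq\; \sum_{i \in [m!]} N_i \cdot \Delta(i, W^{j-1}, c).
\]
Here each coefficient $\Delta(i, W^{j-1}, d)$ is a constant computable in polynomial time once $W^{j-1}$, $d$, and the $i$th preference order are fixed, so this is a genuine linear constraint in the $N_i$ variables. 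I would add these constraints together with the basic budget and consistency constraints; the total number of added constraints is at most $k \cdot m \le m^2$, which is fine. For \PTASCC the only change is that the underlying scoring rule is $t$-Approval with the prescribed $t = \lceil m \cdot \mathrm{w}(k)/k \rceil$, which merely alters how the coefficients $\Delta$ are computed, not the structure of the ILP.

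Two points deserve care, and I expect the tie-breaking to be the main obstacle. First, because the greedy rules break ties by a prespecified order, the constraints above must respect that order: if $w(j)$ is tied with a rival $c$ that the tie-break ranks \emph{before} $w(j)$, then requiring only ``$\geq$'' would wrongly allow $c$ to be selected instead. So for rivals preferred by the tie-break I would use a strict inequality (``$>$'', realized over integers as ``$\geq \text{RHS} + 1$'' after clearing denominators, since all scores are integral), while for rivals ranked after $w(j)$ the non-strict ``$\geq$'' is correct. Getting these strict-versus-nonstrict choices exactly right, consistently with the prespecified order used by the algorithms, is the delicate bookkeeping step. Second, I would note that since the number of committees times the number of ordering functions is at most $2^m \cdot k! \le 2^m \cdot m!$, and each resulting ILP has $(m!)^2 + m!$ integer variables, invoking Lenstra's theorem~\cite{Len83} once per guess yields the overall running time $f(m) \cdot |I|^{O(1)}$, establishing membership in $\fpt$. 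Correctness follows because a shift action is successful for these round-based rules precisely when there exists a choice of committee $W \ni p$ and an ordering $w$ such that the greedy process, with the fixed tie-breaking, builds $W$ in that order — which is exactly what the feasibility of the corresponding ILP asserts.
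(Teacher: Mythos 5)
Your proposal takes essentially the same approach as the paper: for each round $j$ and each rival $c$ you add a linear constraint in the $N_i$ variables forcing $w(j)$ to be the greedy choice over $W^{j-1}$, which (after cancelling the common score of $W^{j-1}$) is exactly the paper's constraint comparing the score of $W^j$ against that of $W^{j-1}\cup\{c\}$, and you invoke Lenstra's theorem per guess of $(W,w)$ just as the paper does. Your explicit treatment of tie-breaking via strict inequalities for tie-break-preferred rivals is a point the paper's proof glosses over, and is a correct refinement rather than a different route.
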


\begin{proof}
  Since \PTASCC is a special case of \GreedyApprovalCC, it suffices to
  describe the extension of the ILP for \GreedyApprovalCC and for
  \GreedyBordaCC.
 
  To ensure that~$p$ is a member of the winning committee~$W$ for
  \GreedyApprovalCC (respectively, \GreedyBordaCC), we have to
  guarantee that the candidate~$w(j)$ (which joins the committee in
  the $j$-th round) maximizes the score of the committee, among all
  the candidates that can be added in the $j$-th round.
  Hence, for each round~$j$ and each $c \in C \setminus W^j$, we add
  the following constraint to the basic ILP, ensuring that in the
  bribed election, the score of~$W^j$ is at least as large as the
  score of $W^{j-1} \cup \{c\}$ (as in the previous lemma, by
  $\phi(i,X)$ we mean the score assigned by a voter with the $i$-th
  preference order to committee $X$):
  \[
    \sum_{i \in [m!]} \phi(i,W^j) \cdot N_i \geq \sum_{i \in [m!]} \phi(i,W^{j-1} \cup \{c\}) \cdot N_i.
  \]
  This finishes the description of the extended ILP.
\end{proof}

The proofs of \autoref{lem:m:kBorda}, \autoref{lem:m:CCs}, and \autoref{lem:m:Greedys} complete
the proof of \autoref{thm:m}.

As the second general result, we note that for the parameterization by the number of voters
we can provide a strong, general $\fpt$ approximation scheme for
\emph{candidate-monotone} rules. Candidate monotonicity, a notion
introduced by Elkind et al.~\cite{EFSS17}, requires that if a
member of a winning committee is shifted forward in some vote, then
this candidate still belongs to some (possibly different) winning
committee.

\begin{proposition}\label{prop:fpt-as}
  Consider parameterization by the number of voters.
  Let $\calR$ be a candidate-monotone multiwinner rule with an
  $\fpt$ algorithm for \textsc{Winner Determination}.
  Then, for every positive constant number~$\varepsilon$
  there is an $\fpt$ algorithm that, given an instance~$I$
  of~$\calR$-\textsc{Shift Bribery} (for arbitrary price functions),
  outputs a successful shift action $\vec{s}$ 
  with cost at most $(1+\varepsilon)\opt(I)$.
\end{proposition}

\begin{proof}
  Bredereck et al.~\cite{BCFNN16} show an $\fpt$ algorithm
  (parameterized by the number of voters) that, given an instance~$I$
  of \textsc{Shift Bribery} and a positive value $\varepsilon$, for
  each possible shift action $\vec{s} = (s_1, \ldots, s_n)$ tries a
  shift action $\vec{s}' = (s'_1, \ldots, s'_n)$ such that for each $i
  \in [n]$ we have $s'_i \geq s_i$, and the cost of $\vec{s}'$ is at
  most $(1+\varepsilon)$ greater than that of $\vec{s}$.
  This algorithm also works for multiwinner rules.
\end{proof}

Among the rules considered in this work, only \GreedyBordaCC, \GreedyApprovalCC, and \PTASCC
are not candidate-monotone (see the work of Elkind et
al.~\cite{EFSS17} for the argument regarding Greedy-Borda-CC).
Thus, the above result applies to all remaining rules.

For the case of all-or-nothing prices, we can strengthen the above
result to an exact $\fpt$ algorithm.

\begin{proposition}\label{prop:all-or-nothing-n}
  Consider parameterization by the number of voters.  Let $\calR$ be a \linebreak
  candidate-monotone multiwinner rule with an $\fpt$ algorithm for
  \textsc{Winner Determination}.
  Then, there is an $\fpt$ algorithm for
  $\calR$-\textsc{Shift Bribery} with all-or-nothing price functions.
\end{proposition}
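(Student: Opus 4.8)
The plan is to exploit the all-or-nothing cost structure together with candidate monotonicity to reduce the (a priori huge) space of shift actions to a finite family of $2^n$ ``extreme'' actions, each of which we test using the assumed $\fpt$ \textsc{Winner Determination} algorithm.

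First I would observe that, for all-or-nothing price functions, the cost of a shift action $\vec{s} = (s_1, \ldots, s_n)$ depends only on the \emph{set} of voters in which $p$ is actually moved, namely $T(\vec{s}) = \{\, v_i : s_i > 0 \,\}$, and equals $\sum_{v_i \in T(\vec{s})} q_{v_i}$. In particular, it never costs anything extra to push $p$ farther in a vote we are already paying for. This motivates restricting attention to the action $\vec{s}^{T}$ that, for each voter in a chosen set $T \subseteq V$, shifts $p$ all the way to the top of that vote and leaves every other vote untouched.

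The key step is to show that some minimum-cost successful shift action has this extreme form. Given any successful shift action $\vec{s}$, let $T = T(\vec{s})$ and consider $\vec{s}^{T}$. By construction $\vec{s}^{T}$ shifts $p$ at least as far forward as $\vec{s}$ in every vote and has exactly the same cost. It remains to argue that $\vec{s}^{T}$ is still successful. Since $\vec{s}$ is successful, $p$ lies in some winning committee of $\shift(E,\vec{s})$. I would then transform $\shift(E,\vec{s})$ into $\shift(E,\vec{s}^{T})$ by shifting $p$ forward one position at a time; at each intermediate election $p$ is (by the induction hypothesis) a member of a winning committee, and shifting a winning-committee member forward keeps it in some winning committee by candidate monotonicity. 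Hence, after all the single-position shifts, $p$ still belongs to a winning committee of $\shift(E,\vec{s}^{T})$, so $\vec{s}^{T}$ is successful; in particular $\opt(I)$ is attained by an extreme action.

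Given this, the algorithm simply enumerates all $2^n$ subsets $T \subseteq V$, computes the cost $\sum_{v \in T} q_v$ of $\vec{s}^{T}$, and for every $T$ whose cost does not exceed $B$ invokes the $\fpt$ \textsc{Winner Determination} algorithm on $\shift(E,\vec{s}^{T})$ to check whether $p$ belongs to some winning committee; it accepts iff some such $T$ succeeds. The total running time is $2^n \cdot f(n) \cdot \poly(|I|)$, where $f(n)\cdot \poly$ bounds the winner-determination step, and this is $\fpt$ with respect to $n$. The only genuine obstacle is the correctness of the reduction to extreme actions, i.e.\ the careful iterated application of candidate monotonicity above; the enumeration and cost accounting are routine.
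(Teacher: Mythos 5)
Your proof is correct and follows essentially the same approach as the paper: the paper likewise argues that with all-or-nothing prices and candidate monotonicity one may shift $p$ all the way to the top in every paid-for vote, and then tries all $2^n$ voter subsets, invoking the $\fpt$ \textsc{Winner Determination} algorithm on each. Your write-up merely spells out the one-position-at-a-time monotonicity argument that the paper leaves implicit.
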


\begin{proof}
   Since $\calR$ is candidate-monotone and we have all-or-nothing
   prices, for every vote where we shift the candidate~$p$
   forward, we can shift~$p$ to the top. 
   In effect, it suffices to try all subsets of voters: For each subset
   check whether shifting~$p$ forward in each vote from the subset
   ensures the victory of~$p$ without exceeding the budget.
\end{proof}

 Using a very similar approach, we can solve \textsc{Shift Bribery} for
 those of our rules which are based on approval scores, even for
 arbitrary price functions (even the round-based ones). The trick is
 that, with approval scores, for each voter we either shift our
 candidate right to the lowest approved position or we do not shift him
 or her at all. Thus, again, trying all subsets of voters suffices.

\begin{proposition}\label{prop:greedy-approval-cc-n}
  There is an $\fpt$ algorithm for \textsc{Shift Bribery} under
  Approval-CC, Greedy-Approval-CC, and PTAS-CC, for the
  parameterization by the number of voters and for arbitrary price
  functions.
\end{proposition}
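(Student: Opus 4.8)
The plan is to mimic the argument used for \autoref{prop:all-or-nothing-n}, exploiting the special structure of approval-based scores to reduce the problem to trying all subsets of voters. The key observation, already flagged in the preceding paragraph, is that under any $t$-Approval-based rule a voter's contribution to the score of any committee depends only on which candidates that voter ranks among the top $t$ positions; it is completely insensitive to the relative order of candidates within the top $t$ and within the bottom $m-t$. Consequently, for the preferred candidate~$p$, the only shift that can possibly change the outcome is one that moves~$p$ from outside the top $t$ positions to inside them. Any partial shift that leaves~$p$ outside the top $t$ has no effect on any score, and any shift beyond the first approved position is wasteful, since a farther shift costs at least as much (price functions are nondecreasing) while producing exactly the same scores.

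First I would make precise the claim that, for a fixed voter~$v$, there are effectively only two relevant choices: either leave~$p$ untouched, or shift~$p$ to the boundary position that first brings it into the approved region. Here I must be slightly careful about which ``approved position'' is relevant, because the three rules use different values of~$t$: Approval-CC and Greedy-Approval-CC use a fixed $t$, and PTAS-CC is the special case of Greedy-Approval-CC with $t = \lceil m\,\mathrm{w}(k)/k\rceil$; in all cases $t$ is determined by the instance, so the target position $t$ (more precisely, the smallest shift that reaches position~$t$) is computable in polynomial time. For a given voter~$v$ the cost of this canonical shift is $\pi_v(\pos_v(p) - t)$ if $\pos_v(p) > t$ and $0$ otherwise, and by monotonicity this is the cheapest shift achieving any score change in~$v$'s vote. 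Thus, without loss of generality, every cost-minimal successful shift action shifts~$p$ either not at all or exactly to position~$t$ in each vote.

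Given this dichotomy, the algorithm is: for each subset $T \subseteq V$ of voters, form the bribed election in which~$p$ is shifted to position~$t$ in exactly the votes of~$T$ (and left unchanged elsewhere), check that the total cost $\sum_{v \in T} \pi_v(\pos_v(p)-t)$ does not exceed the budget~$B$, and then invoke the \textsc{Winner Determination} procedure to test whether~$p$ belongs to some winning committee. I would note that Approval-CC has an $\fpt(n)$ winner determination algorithm (Betzler et al.~\cite{BSU13}), while Greedy-Approval-CC and PTAS-CC have polynomial-time winner determination, so in every case the winner check runs within the allowed time. The whole procedure performs at most $2^n$ iterations, each doing polynomial (or $\fpt(n)$) work, giving an overall $\fpt$ running time in~$n$. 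For correctness one shows that a successful shift action exists if and only if some subset~$T$ passes both tests: the ``if'' direction is immediate, and the ``only if'' direction follows by taking the set~$T$ of voters actually shifted in a minimal successful action and invoking the dichotomy above (shifting exactly to position~$t$ in those votes costs no more, and yields identical scores, hence is still successful).

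The main obstacle, such as it is, lies not in the combinatorial search but in justifying the dichotomy uniformly across the round-based rules. For plain Approval-CC it is transparent that scores depend only on approved sets. For Greedy-Approval-CC and PTAS-CC one must verify that the entire greedy trajectory---and therefore the final committee and whether it contains~$p$---depends on the votes only through their approved sets, so that snapping~$p$ to position~$t$ rather than some deeper position changes nothing in the execution of the greedy algorithm. This is where candidate monotonicity is \emph{not} available (these rules fail it), so I cannot reuse the \autoref{prop:all-or-nothing-n} argument directly; instead I rely purely on the order-insensitivity of approval scores within the approved and unapproved blocks, which holds regardless of monotonicity. Once that invariance is established, the reduction to subset-enumeration is routine.
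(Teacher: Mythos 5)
Your proposal is correct and follows essentially the same route as the paper, which proves this proposition by exactly the dichotomy you describe (for each voter, either leave $p$ in place or shift $p$ to the first approved position, then enumerate all $2^n$ subsets of voters). The paper states this only as a short sketch, whereas you additionally verify the order-insensitivity of the greedy trajectories; this is a welcome elaboration, not a different argument.
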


Finally, using smart brute-force, we provide $\xp$
algorithms for \textsc{Shift Bribery} parameterized either by the
number of voters or the number of unit shifts (for rules that can be
efficiently computed in the given setting).

\newcommand{\thmxpn}{Consider parameterization by the number of voters.
  Then, for every multiwinner rule with an $\xp$ algorithm for
  \textsc{Winner Determination}, there is an $\xp$ algorithm for
  \textsc{Shift Bribery} and arbitrary price functions.}

\begin{proposition}\label{prop:xpn}
  \thmxpn
\end{proposition}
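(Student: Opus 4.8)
The plan is to solve the problem by brute force over all relevant shift actions, exploiting the fact that, with the number $n$ of voters as the parameter, the number of distinct shift actions is polynomially bounded for every fixed~$n$. First I would observe that in any single vote~$v$, shifting $p$ forward can only move $p$ to one of the positions $1, 2, \ldots, \pos_v(p)$; hence there are at most $m = |C|$ meaningfully different choices per vote. A shift action is therefore fully described by a vector of target positions, one per voter, and there are at most $m^n$ such vectors. Since nondecreasing price functions guarantee that the cheapest way to reach a given target position for $p$ in vote~$v$ is the direct forward shift, enumerating target positions (rather than arbitrary shift vectors) loses nothing: every successful shift action is realized, at no greater cost, by one of the enumerated vectors.

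Next I would iterate over all of these (at most $m^n$) candidate shift actions $\vec{s} = (s_1,\ldots,s_n)$, discarding any entry $s_i$ that would shift $p$ past the top of vote~$v_i$. For each one, the total cost $\Pi(\vec{s}) = \sum_i \pi_i(s_i)$ is evaluated in polynomial time directly from the (arbitrary) price functions, so I discard every $\vec{s}$ with $\Pi(\vec{s}) > B$. For each surviving $\vec{s}$ I would construct the bribed election $\shift(E,\vec{s})$ and invoke the assumed $\xp$ \textsc{Winner Determination} procedure (which, as noted in the Preliminaries, can be adapted to test whether $p$ lies in some winning committee) to decide whether $\vec{s}$ is successful. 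I accept the instance if and only if at least one surviving $\vec{s}$ passes this test.

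For the running-time bound I would argue as follows. Since $m \le |I|$, the number of enumerated shift actions is at most $m^n \le |I|^n$, which is polynomial in the input size for every fixed~$n$. Each iteration performs a polynomial-time cost check and one call to the $\xp$ \textsc{Winner Determination} algorithm, whose running time is $|I|^{g(n)}$ for some function~$g$. The overall running time is thus $|I|^{n}\cdot|I|^{g(n)}\cdot|I|^{O(1)} = |I|^{\,n + g(n) + O(1)}$, which is polynomial for every fixed~$n$; this is exactly the definition of an $\xp$ algorithm, establishing the claim.

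The main obstacle here is really a point requiring care rather than a genuine difficulty: I must confirm that the factor $m^n$ stays within the $\xp$ regime, where the degree of the bounding polynomial is permitted to depend on the parameter~$n$. This is precisely why the argument yields only membership in $\xp$ and not in $\fpt$ (an $\fpt$ bound would require the parameter dependence to be multiplicatively separated from a polynomial of fixed degree, which the $m^n$ enumeration does not provide).
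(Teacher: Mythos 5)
Your proposal is correct and follows essentially the same route as the paper's own proof: enumerate the at most $m^n$ shift vectors (one target per voter), filter by budget, and invoke the assumed $\xp$ \textsc{Winner Determination} procedure on each resulting bribed election, yielding an overall $|I|^{n+g(n)+O(1)}$ running time. Your write-up is simply a more detailed version of the paper's argument, with the same enumeration and the same complexity accounting.
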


\begin{proof}
  For each voter, we guess the amount which the preferred candidate
  is shifted by.  Since the maximum amount is $m$, and we have $n$
  voters, we have $O(m^n)$ possibilities to check.  For each
  possibility we check if the preferred candidate is a member of a
  winning committee in $\xp$ time.
\end{proof}

\newcommand{\thmxps}{Consider parameterization by the number of unit shifts.
  Then, for every multiwinner rule with a polynomial-time algorithm
  for \textsc{Winner Determination}, there is an $\xp$ algorithm for
  \textsc{Shift Bribery} and arbitrary price functions.}

\begin{proposition}\label{prop:xps}
  \thmxps
\end{proposition}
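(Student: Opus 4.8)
The plan is to mirror the smart brute-force of \autoref{prop:xpn}, but to bound the search space by the total number of unit shifts rather than by the per-voter shift amounts. The key observation is that any shift action $\vec{s} = (s_1, \ldots, s_n)$ with $\#\vec{s} = s$ touches at most $s$ voters (each voter that is shifted at all contributes at least one unit shift), and is completely described by a weak composition of $s$ into $n$ nonnegative parts. The number of such compositions is $\binom{n+s-1}{s} = O(n^s)$, which is polynomial in the input size for every fixed~$s$.

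First I would, for a fixed target value~$s$, enumerate all shift actions $\vec{s}$ with $\#\vec{s} = s$ (equivalently, all weak compositions of~$s$ into~$n$ parts). For each candidate shift action I would evaluate its cost $\Pi(\vec{s}) = \sum_{i=1}^n \pi_i(s_i)$; since the price functions are given explicitly as part of the input, this can be done for arbitrary price functions in polynomial time (a part $s_i > m-1$ simply corresponds to an infeasible action and is discarded). If the cost does not exceed the budget~$B$, I would build the bribed election $\shift(E,\vec{s})$ and invoke the assumed polynomial-time \textsc{Winner Determination} procedure to test whether~$p$ belongs to some winning committee. If so, $\vec{s}$ is a successful shift action and the instance is accepted.

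To obtain the claimed dependence on the parameter, I would run this enumeration for $s = 0, 1, 2, \ldots$ in increasing order and accept as soon as a successful shift action is found (stopping no later than $s = n(m-1)$, the maximum possible number of unit shifts, after which the instance is rejected). When the minimum number of unit shifts in a successful shift action equals~$s$, the procedure halts during round~$s$, having performed $O(\sum_{s'=0}^{s} n^{s'}) = O(n^{s})$ iterations, each running in polynomial time. Hence the total running time is $O(n^{s}) \cdot \poly(|I|)$, an $\xp$ running time with respect to~$s$.

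There is essentially no hard step here: the argument is a direct brute-force over a search space whose size the parameter controls. The only point requiring (minor) care is the handling of arbitrary price functions, which is immediate because the cost of any fully specified shift action can simply be read off from the input, together with the discarding of those compositions that would shift~$p$ past the top of some vote.
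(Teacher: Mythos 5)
Your proof is correct and follows essentially the same route as the paper: a brute-force enumeration of the $O(n^s)$ shift actions with at most $s$ total unit shifts, checking each against the budget and the polynomial-time \textsc{Winner Determination} procedure. Your encoding via weak compositions of $s$ into $n$ parts (plus the explicit incremental loop over $s$ and the handling of infeasible over-shifts) is a slightly cleaner packaging of the same argument the paper sketches.
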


\begin{proof}
  The idea of the proof is similar to that behind \autoref{prop:xpn}.
  Let $s$ be the number of unit shifts that we can perform and let $n$
  be the number of voters.  We can view a solution as a vector of
  length at most $s$, where an entry in the $i$-th position is the
  name of the voter on whose preference order we apply the $i$-th unit
  shift. We try all $O(n^s)$ such vectors and for each we test if the
  shift action it defines is within the budget and ensures that the
  preferred candidate is in the winning committee.
\end{proof}

\section{SNTV, Bloc, and \textit{k}-Borda}
\label{sec:topk-results}

We now move on to results specific 
to
the voting rules SNTV, Bloc, and $k$-Borda. These rules 
pick $k$
candidates with the highest $1$-Approval, $k$-Approval, and Borda
scores, respectively, and, so, one might suspect that the efficient
algorithms for corresponding single-winner rules would translate to
the multiwinner setting. While this is the case for SNTV and Bloc, for
$k$-Borda the situation is more intricate. 
As a side effect of our research, we resolve the complexity of
Borda-\textsc{Shift Bribery} parametrized by the number of voters,
which was left open by Bredereck et al.~\cite{BCFNN14}.\footnote{\label{BCFNN14note}In fact, the
long version~\cite{BCFNN16} of Bredereck et al.~\cite{BCFNN14} 
refers to the short version of this work~\cite{preliminary} as resolving this open question.}

We first show that \ShiftBribery is polynomial-time solvable for \SNTV
and \Bloc. 
Briefly put, the idea is to guess the final score of the preferred
candidate and to compute the set of candidates that have higher
scores. Then, given committee size $k$, it is easy to compute the
cheapest way to ensure that all but $k-1$ of these candidates have
smaller score than the guessed score of~$p$, while ensuring that $p$
indeed obtains this guessed score.
We rely on the fact that under both rules and for each vote it
suffices to consider only one possible shift action, either shifting
the preferred candidate to the top of the vote (for the case of SNTV)
or shifting the preferred candidate to the first approved position
(for the case of Bloc).

\newcommand{\thmsntvbloc}{SNTV-\ShiftBribery and Bloc-\ShiftBribery
  are both polynomial-time solvable (for arbitrary price functions).  }

\begin{proposition}\label{prop:sntvbloc}
  \thmsntvbloc
\end{proposition}

\begin{proof}
  We use the same algorithm for both SNTV and Bloc. Consider an input
  instance $I$ with an election $E = (C,V)$, where $p$ is the
  preferred candidate, and where the committee size is $k$. Our
  algorithm proceeds as follows.
  
  As first step, we guess the final score that $p$ would have after a
  successful bribery, denoted by $\mathrm{endscore}(p)$.  Since there
  are only polynomially many possibilities, we can simply branch over
  all possible values of $\mathrm{endscore}(p)$ to implement the first
  step.  Then, we consider the set $C' \subseteq C$ of those
  candidates whose scores are greater than $\mathrm{endscore}(p)$.  It
  is clear that to ensure that $p$ is in some winning committee, we
  need to decrease the score of all but $k-1$ candidates from $C'$.
  If $C'$ contains at most $k-1$ candidates, then we do not need to
  decrease the scores of any candidates.

  To this end, we sort the candidates in $C'$ by the cost of
  decreasing their score (by appropriate shifts of $p$) to be equal to
  $\mathrm{endscore}(p)$, and pick all of the candidates in $C'$,
  besides the $k - 1$ most expensive ones. Since for each bribed voter one can
  decrease the score of exactly one candidate, this defines a shift
  action. If this shift action does not guarantee that $p$ has score
  $\mathrm{endscore}(p)$, then we complement it by shifting $p$ to the
  lowest approved position in sufficiently many cheapest votes, to
  ensure that $p$ has score $\mathrm{endscore}(p)$.

  If the thus computed shift action is within budget, then we
  accept. Otherwise, we try another guess of
  $\mathrm{endscore}(p)$. If we try all possibilities without accepting,
  then we reject.
\end{proof}

The situation for \kBorda is different.  
Elkind et al.~\cite{EFS09} have shown that \ShiftBribery is $\np$-hard
for Borda and, so, the same holds for \kBorda.  We show that
Borda-\textsc{Shift Bribery} is $\wone$-hard for parameterization by
the number of voters, resolving a previously open case~\cite{BCFNN14}.\footnoteref{BCFNN14note}
This immediately implies the same hardness results for all our
Borda-based rules.

\begin{theorem}\label{borda_wrt_n}
  Parameterized by the number of voters, \Borda \ShiftBribery is $\w[1]$-hard
  (even for unit price functions).
\end{theorem}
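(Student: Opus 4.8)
The plan is to reduce from \probMCClique (equivalently, one could start from \probClique), which is \wone-complete with the clique/solution size $h$ as the parameter. Given a graph $G$ whose vertex set is partitioned into color classes $V_1,\dots,V_h$, the task is to pick one vertex from each class so that the chosen vertices are pairwise adjacent. The decisive requirement for a parameterized reduction \emph{with respect to the number of voters} is that the election we build have a number of voters bounded by a function of $h$ alone; the entire size of $G$ must instead be absorbed into the number of candidates and into the (polynomially bounded) price and budget data.

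The first thing I would exploit is a simple accounting identity for \Borda \ShiftBribery under unit prices. A shift action $\vec{s}$ raises $\beta_E(p)$ by exactly $\#\vec{s}$ and, simultaneously, lowers $\beta_E(c)$ by $x_c$ for every other candidate $c$, where $x_c$ is the total number of votes in which $p$ is shifted past $c$; moreover $\sum_{c\neq p} x_c = \#\vec{s}$, and with unit prices $\#\vec{s}$ is precisely the cost. Hence making $p$ a winner on budget $B$ amounts to deciding, per vote, how far to shift $p$ so that every candidate $c$ is passed often enough to satisfy $\beta_E(c)-x_c \le \beta_E(p)+\#\vec{s}$, subject to the structural constraint that in each vote the candidates actually passed form a \emph{contiguous block} located immediately above $p$'s original position.

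Building on this, I would lay out the $f(h)$ votes so that the block above $p$ in the votes associated with color $i$ has one ``slot'' per vertex of $V_i$; shifting $p$ so that it lands just below the slot of a vertex $u$ then encodes the choice ``$u$ is selected for color $i$'', while the candidates passed along the way are exactly those encoding the non-selected vertices of that color (which must be defeated) together with the incidences of $u$. Auxiliary/dummy candidates, and padding built into the same $f(h)$ voters, would be used to tune each candidate's initial gap $\beta_E(c)-\beta_E(p)$ to a precisely chosen value, so that the budget $B$ suffices to defeat all opponents if and only if the selected vertices form a clique. Adjacency between the choices for a pair of colors $i,j$ would be enforced by a verification component whose candidates can be pushed below $p$ within the allotted budget exactly when the two chosen vertices span an edge of $G$.

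The hard part will be exactly this enforcement of the $\binom{h}{2}$ pairwise adjacency constraints while keeping the number of voters a function of $h$ only: Borda scores aggregate globally over all votes, yet a single unit shift has only the very local, contiguous-block effect described above, so the gadget must couple the per-color choices through a bounded set of shared ``verification'' candidates rather than through many voters. I expect the correctness proof to split in the usual way: the forward direction (a multicolored clique yields a within-budget successful shift action) is a direct verification, whereas the reverse direction---showing that any successful shift action of cost at most $B$ must follow the ``one full slot per color'' pattern and hence reads off a clique---is the delicate tightness argument, hinging on calibrating $B$ so that not a single point of the shift budget can be wasted and no non-edge can be tolerated.
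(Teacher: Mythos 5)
Your high-level framework is sound and matches the paper's: a bounded number of voters per color class, one ``slot'' per vertex in a long contiguous block above $p$, a budget calibrated so tightly that no unit shift can be wasted, and the accounting identity relating cost, $p$'s gain, and each opponent's loss. But the proposal stops exactly where the actual work begins, and the piece you defer is not a routine gadget.

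The difficulty is that in \Borda{} \ShiftBribery{} the only primitive you can enforce is universal and negative: ``candidate $c$ must be passed in at least one vote.'' Your choice of \probMCClique{} forces you to express an existential, positive condition (``the two chosen vertices \emph{do} span an edge''), which does not reduce to that primitive; this is why your ``verification component'' remains unspecified. The paper sidesteps this entirely by reducing from \probMCIS{}: it introduces one candidate per \emph{edge} of $G$, embeds the candidate for $e=\{u,w\}$ inside the segment $S(u)=u \pref E(u) \pref F(u)$ in the votes for $u$'s color and inside $S(w)$ in the votes for $w$'s color, and arranges scores so that every vertex and edge candidate must lose at least one point. Selecting vertex $v^{(i)}_{s_i}$ means skipping exactly the block $E(v^{(i)}_{s_i})$, so an edge candidate survives (and defeats $p$) precisely when both its endpoints are selected --- i.e., the selection fails to be independent. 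You could recover your clique version by using non-edge candidates, but that is just this reduction in the complement graph. A second, smaller gap: with a single vote per color, shifting $p$ to ``just below the slot of $u$'' passes only the candidates ranked below that slot, not ``all non-selected vertices of that color together with the incidences of $u$'' as you claim. The paper needs a \emph{pair} of votes per color with the segments in opposite orders (plus a reversed pair to neutralize the score side-effects), splitting the budget $B/h$ between them so that the union of passed candidates is everything in the color class except the edge block of the one selected vertex. Without this pairing trick the intended ``one full slot per color'' pattern is not realizable by contiguous shifts, and the tightness argument in your reverse direction cannot go through.
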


\begin{proof}
  We give a parameterized reduction from 
  the $\w[1]$-hard \probMCIS problem.
  Given a graph $G = (V(G),E(G))$ where each vertex has one of $h$ colors,
  \probMCIS asks whether there are $h$ vertices of pairwise-distinct colors
  such that no two of them are connected by an edge.
  Let~$(G,h)$ be our input instance.
  Without loss of generality, we assume that the
  number of vertices of each color is the same and that there are no
  edges between vertices of the same color.
  We write $V(G)$ to denote the set of $G$'s vertices, and $E(G)$ to
  denote the set of $G$'s edges. Further, for every color $i \in [h]$,
  we write $V^{(i)} = \{v_1^{(i)}, \ldots, v_q^{(i)}\}$ to denote the
  set of vertices of color~$i$. For each vertex $v$, we write $E(v)$
  to denote the set of edges incident to~$v$.  For each vertex $v$, we
  write $\delta(v)$ to denote its degree, i.e., $\delta(v) = |E(v)|$
  and we let $\Delta = \max_{u \in V(G)}\delta(u)$ be the highest
  degree of a vertex~$G$. 

  We form an instance of Borda-\textsc{Shift-Bribery} as follows.  We
  let the candidate set be:
  \[
    C = \{p\} \cup V(G) \cup E(G) \cup F(G) \cup D' \cup D''\text{,}
  \]
  where $F(G)$, $D'$, and $D''$ are sets of special dummy candidates.
  For each vertex $v$, we let $F(v)$ be a set of $\Delta - \delta(v)$
  dummy candidates, and we let $F(G) = \bigcup_{v \in V(G)}F(v)$. We set
  $F({-i}) := \bigcup_{v \in V^{(i')},i'\neq i}F(v)$.  We will
  specify $D'$ and $D''$ later.
  For each vertex $v$, we define the partial preference order $S(v)$:
  \[ 
    S(v) \colon v \pref E(v) \pref F(v).
  \]
  For each color $i$, we define $R(i)$ to be a partial preference
  order that ranks first all members of $D'$, then all vertex
  candidates of colors other than $i$, then all edge candidates
  corresponding to edges that are not incident to a vertex of
  color~$i$, then all dummy vertices from $F(-i)$, and finally all
  candidates from~$D''$.

  We use unit price functions and we set the budget to be
  $B = h(q + (q-1)\Delta)$. We set~$D'$ and~$D''$ to consist of
  $2B$~dummy candidates each.  We create the following voters:
  \begin{enumerate}
  \item
  For each color $i \in [h]$, we introduce four voters, $x_i, x'_i, y_i$, and $y'_i$.
  Voters $x_i$ and $x'_i$ have the following preference orders:
    \begin{align*}
      x_i \colon & S(v^{(i)}_1) \pref S(v^{(i)}_2) \pref \cdots \pref S(v^{(i)}_q) \pref p \pref R(i),  \\
      x_i' \colon & \revnot{S(v^{(i)}_q)} \pref \revnot{S(v^{(i)}_{q-1})}
      \pref \cdots \pref \revnot{S(v^{(i)}_1)} \pref p \pref R(i).
    \end{align*}
    Voters $y_i$ and $y'_i$ have preference orders that are reverses
    of those of $x_i$ and $x'_i$, respectively, except that candidates
    from~$D''$ are ranked last in their votes as well.

  \item
  We create a voter~$z$ with the preference order:
    \[
    z \colon F(G) \pref V(G) \pref E(G) \pref D' \pref p \pref D'',
    \]
    and a voter $z'$ with the preference order that is obtained from that of~$z$
    by first reversing it, and then shifting each member of $V(G) \cup
    E(G)$ by one position forward, and shifting $p$ by $B$
    positions back.
  \end{enumerate}

  Let $L$ be the score of $p$ prior to executing any shift
  actions. 
  Simple calculations show that each candidate in $V(G) \cup E(G)$ has
  score $L + B + 1$, and each candidate in $F(G) \cup D' \cup D''$ has
  score at most $L+B$ (to see this, it suffices to consider voters $z$ and $z'$
  as the other voters have preference orders that are symmetric with
  respect to all the candidates except for those in $D''$, who are
  always ranked last).

  We show that it is possible to ensure the victory of $p$ in our
  election by a bribery of cost at most~$B$ if and only if there is a
  multicolored independent set for $G$ of size $h$.

  For the ``if'' case, we show that if $G$ has a multicolored independent set, then
  there is a successful shift action of cost $B$ in our election.  Let
  us fix a multicolored independent set for $G$ and, for each color $i
  \in [h]$, let $v^{(i)}_{s_i}$ be the vertex of color $i$ from this
  set. For each pair of voters $x_i, x'_i$, we shift $p$ so that in
  $x_i$ he or she ends up right in front of $v^{(i)}_{s_i+1}$ (or $p$
  does not move if $s_i = q$), and in $x'_i$ he or she ends up right in
  front of $v^{(i)}_{s_i}$. This way, $p$ passes every vertex candidate
  from $V^{(i)}$ and every edge candidate from $\left(\bigcup_{t \in
      [q]}E(v^{(i)}_t) \right) \setminus E(v^{(i)}_{s_i})$.  This
  shift action costs $B/h$ for every pair of voters $x_i,x'_i$, so, in
  total, costs exactly $B$. Further, clearly, it ensures that $p$
  passes every vertex candidate so each of them has score~$L+B$.
  Finally, since we chose vertices from an independent set, every edge
  candidate also has score at most~$L+B$:
  If $p$ does not pass some edge~$e$ between vertices of colors~$i$ and~$j$
  for a pair of voters $x_i, x'_i$, then $p$ certainly passes $e$ in the
  pair of votes $x_j, x'_j$ because $v^i_{s_i}$ and $v^j_{s_j}$ are not adjacent.

  For the ``only if'' case, we show that if there is a successful shift action for our
  instance, then there is a multicolored independent set for $G$.  We
  note that a shift action of cost $B$ gives $p$ score $L+B$. Thus,
  for the shift action to be successful, it has to cause all
  candidates in $V(G) \cup E(G)$ to lose a point. We claim that a
  successful shift bribery has to use exactly $B/h = (q +
  (q-1)\Delta)$ unit shifts for every pair of voters $x_i$, $x'_i$.
  Why is this so? Let us fix some color $i \in [h]$.  Every successful
  shift action has to decrease the score of every vertex candidate
  and $x_i$, $x'_i$ are the only votes where $p$ can pass the vertex
  candidates from $V^{(i)}$ without exceeding the budget.
  If we spend fewer than $B/h$ units of budget on $x_i, x'_i$, then there
  will be some vertex candidate corresponding to a vertex from $V^{(i)}$
  that $p$ did not pass (and, in effect, which does not lose a point),
  and so $p$ will not be a winner.
  Thus we know that a successful shift action spends $B/h$ units of budget
  on every pair of voters $x_i, x'_i$. Further, we can assume that for
  each color~$i$ there is a vertex $v_{s_i}^{(i)} \in V^{(i)}$
  such that in~$x_i$ candidate~$p$ is shifted to be right in front of
  $v^{(i)}_{s_i+1}$ and in $x'_i$ candidate~$p$ is shifted to be right
  in front of $v^{(i)}_{s_i}$. We call such a vertex $v^{(i)}_{s_i}$ 
  \emph{selected}.
  If for a given pair of voters $x_i, x'_i$ neither of the vertices
  from~$V^{(i)}$ were selected, then there would be some vertex
  candidate in~$V^{(i)}$ that $p$ does not pass. 
  If for some pair of voters $x_i, x'_i$ vertex
  $v^{(i)}_{s_i}$ is selected, then in this pair of votes $p$ does not pass
  the edge candidates from $E(v^{(i)}_{s_i})$. However, this means that in
  a successful shift action the selected vertices form an independent
  set of~$G$. If two vertices $v^{(i)}_{s_i}$ and $v^{(j)}_{s_j}$ were
  selected, $i \neq j$, and if there were an edge $e$ connecting them,
  then $p$ would not pass the candidate~$e$ in either of the pairs
  of votes $x_i,x'_i$ or $x_j, x'_j$. Since these are the only votes where
  $p$ can pass~$e$ without exceeding the budget, in this case $e$
  would have $L+B+1$ points, while $p$ would have $L+B$ points and would lose. 
\end{proof}
In effect, we have the following corollary 
(we discuss other Borda-based rules later).

\begin{corollary}\label{cor:kBorda-n}
  Parameterized by the number of voters,
 $k$-Borda-\textsc{Shift Bribery} is $\wone$-hard.
\end{corollary}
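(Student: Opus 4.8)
The plan is to observe that the single-winner \Borda rule is exactly the committee-size-one special case of \kBorda, and then transfer the hardness from \autoref{borda_wrt_n} by an essentially trivial parameterized reduction. First I would recall that under \kBorda a winning committee consists of the $k$ candidates with the highest Borda scores (outputting all such committees in case of ties). When $k=1$, a winning committee is precisely a singleton $\{c\}$ where $c$ attains the maximum Borda score in the election; this is exactly the condition under which $c$ is a (possibly non-unique) winner under the single-winner \Borda rule.

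Given an instance $I$ of \Borda-\ShiftBribery with election $E=(C,V)$, preferred candidate $p$, price functions $\Pi$, and budget $B$, I would produce the instance $I'$ of \kBorda-\ShiftBribery that is identical to $I$ except that the committee size is declared to be $k=1$. No candidates or voters are added or removed, and the price functions and budget are copied verbatim. A shift action $\vec{s}$ with $\Pi(\vec{s}) \le B$ makes $p$ a member of some winning committee of $\shift(E,\vec{s})$ under \kBorda with $k=1$ if and only if $p$ attains the maximum Borda score in $\shift(E,\vec{s})$, which is exactly the success condition for $I$. Hence $\vec{s}$ is successful for $I'$ if and only if it is successful for $I$, so the two instances are equivalent (and in particular their optimal costs coincide).

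Finally, since the number of voters in $I'$ equals the number of voters in $I$, this is a parameterized reduction for the parameterization by the number of voters. As \autoref{borda_wrt_n} establishes that \Borda-\ShiftBribery is $\wone$-hard for this parameterization (even with unit price functions, which are preserved by the reduction), the same hardness follows for \kBorda-\ShiftBribery. There is essentially no obstacle here beyond confirming that the winner-determination conventions agree: the only point to verify is that the non-unique-winner and tie-breaking semantics of \kBorda at $k=1$ coincide with those of single-winner \Borda, which they do by definition, since both rules merely ask whether $p$ is among the highest-scoring candidates.
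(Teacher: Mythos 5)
Your proposal is correct and matches the paper's own argument: the corollary is obtained exactly by observing that \kBorda with committee size $k=1$ coincides with single-winner \Borda in the non-unique-winner model, so the $\wone$-hardness of \autoref{borda_wrt_n} transfers directly, with the number of voters (and the unit price functions) preserved. Nothing further is needed.
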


\autoref{cor:kBorda-n} shows that the $\fpt$ approximation scheme
from \autoref{prop:fpt-as} can presumably not be replaced by an $\fpt$ algorithm.
By \autoref{prop:all-or-nothing-n}, we also know that
$k$-Borda-\textsc{Shift Bribery} is in $\fpt$ for all-or-nothing
prices and the parameterization by the number of voters.

The next result is, perhaps, even more surprising than \autoref{borda_wrt_n}.
It turns out that $k$-Borda-\textsc{Shift Bribery} is $\wone$-hard 
also for the parameterization by the number of unit shifts,
whereas Borda-\textsc{Shift Bribery} is in $\fpt$ for this parameterization.
To this end, we describe a parameterized reduction from \probClique.

\newcommand{\ttkbordawrts}{Parameterized by the number $s$ of unit shifts,
  \kBorda \ShiftBribery is $\w[1]$-hard.}

\begin{theorem}\label{kborda_wrt_s}
  \ttkbordawrts
\end{theorem}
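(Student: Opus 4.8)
The plan is to give a parameterized reduction from \probClique to \kBorda \ShiftBribery in which the number of unit shifts used by any successful shift action is bounded by a function of the clique size $h$ alone; since \probClique parameterized by $h$ is $\w[1]$-hard, this yields the claim. Given an instance $(G,h)$, I would introduce the preferred candidate $p$, a \emph{vertex candidate} $u_v$ for every $v \in V(G)$, an \emph{edge candidate} $w_e$ for every $e \in E(G)$, together with enough dummy candidates to pad scores to the desired values. Unit price functions are used throughout, the budget $B$ is set equal to the target number of shifts $s$, and I would choose $s$ to be a polynomial in $h$ (on the order of $\binom{h}{2}$), so that it depends on the parameter only.

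I would calibrate the initial scores so that, after spending the full budget, $p$ reaches a threshold score $T$, while every vertex and edge candidate starts strictly above $T$. Recalling that, under \kBorda, $p$ lies in some winning committee exactly when at most $k-1$ candidates have strictly larger score than $p$, I would pick the committee size $k$ so that precisely $h+\binom{h}{2}$ of the high candidates must be pushed down to score at most $T$. The intention is that these candidates are forced to be the $h$ vertex candidates and the $\binom{h}{2}$ edge candidates of an $h$-clique.

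The heart of the construction is a coupling gadget that links an edge candidate to its two endpoints. Using shared votes in which $w_e$ sits directly above $u_v$ and $u_{v'}$, which in turn sit directly above $p$, shifting $p$ far enough to pass $w_e$ necessarily also passes $u_v$ and $u_{v'}$, so every unit of score removed from an edge candidate is paid for by simultaneously removing score from both of its endpoints. With the initial score of each vertex candidate tuned to $T+(h-1)$, a vertex incident to $h-1$ selected edges is pushed exactly to $T$; thus selecting the $\binom{h}{2}$ edges of a clique drops its $h$ vertices to $T$ for free, and a short calculation shows the total number of shifts equals the budget. The forward direction (clique $\Rightarrow$ successful shift action) then follows directly.

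The main obstacle is the converse: showing that no shift action within the budget can succeed unless $G$ has an $h$-clique. Here I would argue by a tight counting that compares the number of candidates reaching score at most $T$ against the number of shifts they require, exploiting that $\binom{h}{2}$ edges are incident to at least $h$ vertices, with equality only for a clique. The delicate point---and where the scores and vote positions must be chosen with care---is to rule out ``cheating'' solutions that drop an unintended mixture of vertex and edge candidates (for instance, dropping many cheap edge candidates without their endpoints, or activating more than $h$ distinct vertices); the gadget must make the per-candidate shift cost of any non-clique selection strictly exceed the budget, so that the distinct vertices touched are effectively capped at $h$. Getting this accounting to be exact in both directions, rather than merely consistent, is the crux of the proof.
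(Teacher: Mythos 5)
Your proposal is correct and follows essentially the same route as the paper's proof: a parameterized reduction from \probClique with a budget polynomial in $h$, per-edge votes in which passing an edge forces $p$ to also pass both endpoint candidates, vertex candidates that must each lose $h-1$ points, and a tight counting argument exploiting that $\binom{h}{2}$ edges can give $h$ vertices degree $h-1$ each only if they form a clique. The only real difference is in the gadget that caps the number of activated edge votes at $\binom{h}{2}$ --- the paper has no edge candidates and instead pads each edge vote with $h^3$ dummies so that each activation costs $2+h^3$ and the cap follows by dividing the budget, whereas you add edge candidates that must themselves be dropped below the threshold; your variant does work out (with budget $3\binom{h}{2}$, requiring $x$ dropped edge candidates and $h+\binom{h}{2}-x$ dropped vertex candidates yields $(h-2)x\ge(h-2)\binom{h}{2}$, forcing $x=\binom{h}{2}$ for $h\ge 3$ and then equality everywhere, i.e., a clique), so the "crux" you flag is indeed closable exactly as you sketch.
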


\begin{proof}
  We provide a parameterized reduction from the 
  $\wone$-complete \probClique problem in which we are
  given a graph $G$ with $V(G) = \{v_1, \ldots, v_n\}$ and $E(G) =
  \{e_1, \ldots, e_m\}$ and we ask whether there is a set of $h$ pairwise
  adjacent vertices in~$G$. 

  Given an instance of the \probClique problem, create an instance
  of \kBorda \ShiftBribery as follows.  Set the budget $B := {h
    \choose 2} \cdot (2 + h^3)$, use unit price functions, and
  set the size of the committee $k := n - h + 1$. The
  candidate set is:
  \[ 
     C = \{p\} \cup V(G) \cup D(G) \cup F,
  \] 
  where the sets $D(G)$ and~$F$ contain dummy candidates
  specified as follows.  For each edge $e$ from the graph, let $D(e)$
  be a set of $h^3$ dummy candidates, and let $H$ be a set of $B$ dummy
  candidates. Set
  $D(G) := \left( \bigcup_{e \in E(G)}D(e) \right) \cup H$. Define $F$
  to contain $B + (h-1)$ dummy candidates.

  We form the set of voters as follows:
  \begin{enumerate}
  \item
  For each edge $e = \{u, v\}$ from $G$ we introduce voter $x_e$
    with preference order:
  \[
  u \pref v \pref D(e) \pref p \pref D(G) \setminus D(e) \pref V(G)
  \setminus \{u,v\} \pref F,
  \]
  and voter $y_e$ whose preference order is the reverse of that 
  of~$x_e$, with candidates from $F$ shifted to the bottom positions.

  \item
  We introduce two voters, $z$ and $z'$, where $z$ has preference
  order $V(G) \pref F \pref p \pref D(G)$ and $z'$ has preference
  order $F \pref p  \pref \revnot{V(G)} \pref D(G)$.
  \end{enumerate}

  All vertex candidates have the same score in this election, and we
  denote it by $L$. Candidate $p$ has score $L - (h-1) -B$, and all
  remaining candidates have score lower than $L$ (note that we can
  assume that $G$ has more than $\binom{h}{2}$ edges as otherwise it
  certainly does not contain a size-$h$ clique).
  Intuitively, shifting $p$ to the top positions in votes $x_e$
  corresponding to a size-$h$ clique is the only way to ensure $p$'s~victory

  It remains to be shown the correctness of the construction.
  More precisely, we show that $G$ contains a clique of size $h$ if and only if there is a
  successful shift action for our instance of $k$-Borda-\textsc{Shift Bribery}.

  For the ``if'' case, assume that there is a clique if size~$h$ in~$G$.
  Then, a successful bribery can shift~$p$ to the
  front of all $x_e$ voters corresponding to the edges inside this
  clique. This gives $p$ additional $B$~points and causes each vertex
  from the clique to lose $h-1$ points. In effect, there are $n-h$
  vertex candidates with score higher than that of $p$, and $h$ vertex
  candidates with the same score as $p$. Since all other
  candidates already had lower scores, $p$~belongs to at least one
  winning committee.

  For the ``only if'' case, note that $p$ can join some winning
  committee only if at least $h$ vertex candidates lose $h-1$ points
  each. Without exceeding the budget, $p$ can pass vertex candidates
  only in $x_e$ votes. Through simple arithmetic, we see that within a
  given budget we can shift $p$ to pass some vertex candidates in at
  most $\binom{h}{2}$ of these votes and, so, in each of them we can
  shift $p$ to the top position. That is, a successful shift action
  passes vertices corresponding to $\binom{h}{2}$ edges. This can lead
  to $h$ vertex candidates losing at least $h-1$ points each (or, in fact,
  exactly $h-1$ points each) only if these edges form a size-$h$
  clique.
\end{proof}

\section{Chamberlin-Courant and Its Variants}
\label{sec:CC-results}

We now move on to the Chamberlin-Courant (CC) rules and their approximate variants.
These rules try to find a committee such that every voter is represented well by
some member of the committee.
Recall that \textsc{Winner determination} for Borda-CC and Approval-CC
is NP-hard but can be solved efficiently for the approximate variants.
To some extent, this difference in the computational complexity is
also reflected by our findings for \textsc{Shift Bribery}.

Note that many results for the CC-based rules (see also \autoref{table:results}
in \autoref{sec:general_results})
follow from our results from previous sections.
For the parameterizations by the number of candidates, 
\autoref{thm:m} gives $\fpt$ results for all CC-based rules.
For the parameterization by the number of voters,
by \autoref{prop:greedy-approval-cc-n} we
have $\fpt$ results for Approval-CC, Greedy-Approval-CC, 
and PTAS-CC.
We inherit $\wone$-hardness for Borda-CC and Greedy-Borda-CC 
from \autoref{borda_wrt_n}, since both rules coincide with
the single-winner Borda rule in the case of committee size~$k=1$.

\begin{corollary}\label{cor:BordaCC-n}
  \textsc{Shift Bribery} parameterized by the number of voters is
  $\wone$-hard for Borda-CC and for Greedy-Borda-CC even for unit
  price functions.
\end{corollary}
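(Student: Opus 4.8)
The plan is to obtain Corollary~\ref{cor:BordaCC-n} as an immediate consequence of Theorem~\ref{borda_wrt_n} via a trivial ``committee size one'' argument. The key observation is that both \BordaCC and \GreedyBordaCC degenerate to the single-winner \Borda rule when the committee size is $k=1$: with a committee $S=\{c\}$ of a single candidate, every voter is represented by $c$, so the \BordaCC score of $\{c\}$ is exactly $\sum_{v\in V}\beta_v(c)=\beta_E(c)$, the ordinary \Borda score; hence a winning size-one committee under \BordaCC is precisely a \Borda winner. For \GreedyBordaCC the first (and only) greedy iteration likewise picks a candidate maximizing the \Borda-CC score of a singleton, which again is a \Borda winner, so the two rules coincide with single-winner \Borda at $k=1$.

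First I would note that Theorem~\ref{borda_wrt_n} already establishes that \Borda~\ShiftBribery is $\w[1]$-hard when parameterized by the number of voters, even under unit price functions. I would then observe that an instance of \Borda~\ShiftBribery is, verbatim, an instance of \BordaCC~\ShiftBribery (resp.\ \GreedyBordaCC~\ShiftBribery) once we fix the committee size to $k=1$: the input election, preferred candidate~$p$, price functions, and budget are unchanged, and the success condition ``$p$ belongs to some winning committee'' becomes ``$p$ is a \Borda winner,'' matching the single-winner goal. Since this embedding is the identity on the instance and leaves the parameter (the number of voters~$n$) exactly unchanged, it is trivially a valid parameterized reduction.

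Therefore the $\w[1]$-hardness transfers directly: a shift action of cost at most~$B$ makes $p$ a \Borda winner if and only if it makes $\{p\}$ a winning committee under \BordaCC (resp.\ \GreedyBordaCC) for $k=1$, so solving \ShiftBribery for either multiwinner rule would solve \Borda~\ShiftBribery, contradicting Theorem~\ref{borda_wrt_n} under the standard assumption $\fpt\neq\w[1]$. There is no real obstacle here; the only point requiring care is the verification that the singleton-committee semantics of the two CC-based rules genuinely coincide with single-winner \Borda, which follows immediately from the definitions of the representation-based score and of the greedy procedure.
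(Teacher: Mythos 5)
Your argument is exactly the one the paper uses: the corollary is derived from Theorem~\ref{borda_wrt_n} by observing that Borda-CC and Greedy-Borda-CC coincide with single-winner Borda for committee size $k=1$, so the $\wone$-hardness is inherited with the parameter (the number of voters) unchanged. This matches the paper's reasoning, so nothing further is needed.
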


By \autoref{prop:fpt-as}, we have that there is an
$\fpt$ approximation scheme for Borda-CC. However, since
\autoref{prop:fpt-as} strongly relies on candidate monotonicity of
the rule, it does not apply to Greedy-Borda-CC. Indeed, we conjecture
that there is no constant-factor $\fpt$ approximation algorithm for
Greedy-Borda-CC-\textsc{Shift Bribery} (parameterized by the number of
voters). 
So far we could prove this only for the case of
weighted elections, i.e., for the case where each voter $v$ has an
integer weight $w_v$ and counts as $w_v$ separate voters for computing
the result of the election (but not for the computation of the
parameter). On the one hand, one could say that 
using weighted votes goes against the spirit of
parameterization by the number of voters and, to some extent, we agree.
On the other hand, however, all our $\fpt$ results for
parameterization by the number of voters (including the $\fpt$
approximation scheme in \autoref{prop:fpt-as}) do hold for the weighted case.
By a parameterized reduction from the \textsc{Multicolored Clique}
problem, we obtain the following.

\newcommand{\tthmgbcci}{Unless $\wone=\fpt$, Greedy-Borda-CC-\textsc{Shift Bribery} with
  weighted votes is not $\alpha$-approximable for any constant
  $\alpha$, even in $\fpt$ time with respect to the number of voters
  and even for unit price functions.}

\begin{theorem}\label{thm:greedy-borda-cc-inapprox}
  \tthmgbcci
\end{theorem}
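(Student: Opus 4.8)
The plan is to give a parameterized reduction from \probMCClique (whose parameter is the clique size~$h$) to the weighted variant of \GreedyBordaCC-\ShiftBribery with unit price functions, engineered so that the resulting instance~$I$ exhibits a sharp feasibility gap: if the input graph has a multicolored clique then there is a cheap successful shift action, whereas if it has no such clique then \emph{no} shift action of any cost makes~$p$ a member of the (unique) committee returned by \GreedyBordaCC, so that $\opt(I)=\infty$. Given such a reduction, any $\alpha$-approximation algorithm (for any fixed constant~$\alpha$) running in $\fpt$ time with respect to the number of voters would have to return a finite-cost successful shift action on clique instances and report infeasibility on non-clique instances, thereby deciding \probMCClique in $\fpt$ time and forcing $\wone=\fpt$. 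The whole point of allowing weighted votes is to keep the number of \emph{distinct} voters bounded by a function of~$h$ (I will use only $O(h^2)$ voter types, roughly one group per pair of colors) while encoding all vertices and edges of the graph through the integer weights; since weights do not count towards the parameter, this is a legitimate parameterized reduction for the parameter ``number of voters''.

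The core of the construction is a selection gadget that forces \GreedyBordaCC to pick, in its first rounds, one vertex candidate per color, reserving a single late committee slot for~$p$ (so $k$ is $h$ plus a fixed number of slots). Because the first greedy pick is always a \Borda winner and each later pick maximizes marginal coverage, I would control the Borda scores tightly enough---via the voter weights---that the round-$i$ choice is exactly the color-$i$ vertex candidate in front of which~$p$ has been shifted in the corresponding gadget vote. Here the non-monotonicity of \GreedyBordaCC is essential: shifting~$p$ forward does not merely raise $\beta_v(p)$, it simultaneously depresses the Borda scores of the candidates~$p$ overtakes, and I would arrange that a shift ``selecting'' a vertex for color~$i$ leaves~$p$ eligible for the final slot only if, for every other color~$j$, the vertex selected for~$j$ is adjacent to it. Concretely, each pair-of-colors voter type rewards~$p$ in the final round precisely when the two selected endpoints span an edge of~$G$; if even one selected pair is a non-edge, a dedicated blocker candidate keeps strictly larger marginal score than~$p$ in that round, so~$p$ is never chosen, no matter how aggressively we shift.

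To close the gap I would verify the two directions separately. For the forward direction, a multicolored clique yields a coherent family of shifts---one bounded shift per color gadget---whose total unit cost is a fixed value~$c$ independent of the graph size, and along this shift action the greedy trajectory selects the clique vertices and then admits~$p$, so $\opt(I)\le c$. For the reverse direction, I would argue that whenever the selected vertices fail to form a clique some missing edge leaves the corresponding blocker dominating~$p$ in the final round, and that, by non-monotonicity, any attempt to additionally shift~$p$ past that blocker necessarily perturbs an earlier round and re-excludes~$p$ there; hence there is \emph{no} successful shift action at all, giving $\opt(I)=\infty$.

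The main obstacle will be exactly this robustness of the gadget: I must guarantee that the greedy's round-by-round commitments are insensitive to the (bounded) shifting used to steer individual rounds, so that steering color~$i$ cannot accidentally corrupt an already-fixed choice for some earlier color~$i'$, while still making the final admission of~$p$ hinge on the \emph{global} clique condition. Making all these marginal-score inequalities hold simultaneously across the $O(h^2)$ voter types---and survive the score perturbations caused by the shifts themselves---is where the careful weight bookkeeping lives; with candidate monotonicity this balancing act would be impossible, which is precisely why the analogous positive result (the $\fpt$ approximation scheme of \autoref{thm:fpt-as}) applies to \BordaCC but not to \GreedyBordaCC.
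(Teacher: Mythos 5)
Your high-level shape matches the paper's: a parameterized reduction from \probMCClique with only $O(h^2)$ distinct (but weighted) voters, steering \GreedyBordaCC round by round so that it elects one vertex candidate per color and admits~$p$ into a final slot only under a clique condition. But there are two genuine gaps. First, your inapproximability argument rests on the claim that on no-instances \emph{no} shift action of any cost succeeds, i.e.\ $\opt(I)=\infty$. This is impossible for \GreedyBordaCC: shifting~$p$ to the top of every vote makes~$p$ the unique Borda winner, and the first greedy pick is always a Borda winner, so~$p$ joins the committee; hence $\opt(I)$ is always finite. The paper instead uses a budget-relative gap: for a given constant~$\alpha$ it scales the padding (the parameter~$N$) so that within cost $\alpha B$ the only effective shifts are the intended selection shifts (in particular~$p$ can never pass the ``bar'' candidate~$b$ that is always elected first); then an $\alpha$-approximation returns a successful action of cost at most $\alpha B$ exactly on yes-instances, which decides \probMCClique in $\fpt$ time. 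You need this kind of argument; the infeasibility route cannot work.

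Second, you give no concrete mechanism for the pairwise adjacency check. In the final round a voter's contribution to~$p$'s marginal score is $\max(0,\pos_v(c)-\pos_v(p))$ where~$c$ is that voter's best committee member, so to make a ``blocker'' dominate~$p$ exactly when two selected vertices are non-adjacent, some voter's score must depend on which \emph{pair} of vertices was selected---and with boundedly many voters per color pair and a single ranking each, there is no obvious way to encode all $n^2$ adjacency bits. The paper's solution is to enlarge the committee to $1+h+\binom{h}{2}+1$ and have the greedy also select one \emph{edge candidate} per color pair (steered by heavily weighted edge-selection voters); verification voters then interleave each vertex with its incident edges, so~$p$ loses exactly $(n+1)(d+1)$ points per color when the selected edges are incident to the selected vertices, and at least one extra point otherwise, which is what lets~$p'$ overtake~$p$. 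Without this (or an equivalent) device, the ``reward~$p$ iff the selected endpoints span an edge'' step of your sketch does not go through.
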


\begin{proof}
  We first prove $\wone$-hardness of the problem and then argue that
  this proof implies the claimed inapproximability result.  

  We give a reduction from the \textsc{Multicolored Clique} problem
  for the case of regular graphs, which is $\wone$-complete for the
  parameter solution size~$h$ (see, e.g., the work of Mathieson and
  Szeider~\cite[Lemma 3.2]{MS12}).  To this end, let $G = (V(G),E(G))$
  be our input graph and let $h$ be the size of the desired clique
  (and the number of vertex colors). We use the following
  notation. For each color $i \in [h]$, we let
  $V^{(i)} = \{v^{(i)}_1, \ldots, v^{(i)}_n\}$ be the set of vertices
  from $G$ with color $i$. For each vertex $v \in V(G)$, we write
  $E(v)$ to denote the set of edges incident to $v$.  Since $G$ is
  regular, we let~$d$ be the common degree of all the vertices (i.e.,
  for each vertex $v$, $|E(v)| = d$).  For each pair of distinct
  colors $i,j \in [h]$, $i < j$, we write $E(i,j)$ to denote the set
  of edges between vertices of color~$i$ and vertices of color~$j$.

  We make the following observation regarding Greedy-Borda-CC.  In
  each iteration it picks a candidate with the highest score, where
  this score is computed as follows: Let $W$ be the set of candidates
  already selected by Greedy-Borda-CC at this point. Consider
  candidate $c$ and voter $v$, and let $d$ be the candidate from $W$
  that $v$ ranks highest. Voter $v$ gives $\max(0,
  \pos_v(d)-\pos_v(c))$ points to $c$ (i.e., the number of points by
  which adding $c$ to $W$ would increase the score of $v$'s
  representative). The score of a candidate in a given iteration is
  the sum of the scores it receives from all voters.
  We form an instance of Greedy-Borda-CC-\textsc{Shift Bribery} as follows.

  \smallskip
  \emph{The candidates.}\quad
  We let the candidate set be $C = \{b,p,p'\} \cup V(G) \cup E(G)
  \cup D$, where $p$ is the preferred candidate,
  $p'$ is $p$'s direct competitor in the sense that either~$p$ or~$p'$
  will be in the committee,
  $b$ is the ``bar'' candidate (see explanation below),
  and $D$ is a set of dummy candidates.
  Throughout the construction we will introduce many dummy candidates
  and we do not give them special names; at the end of the construction
  it will be clear that we add only polynomially many of them.
  We will ensure that $b$, the bar
  candidate, is always chosen first into the committee, so---in
  essence---the scores of all other candidates can be computed
  relative to $b$. Thus, when we describe a preference order, we list
  only top parts of the voters' preference orders, until candidate~$b$.
  Candidate~$p$ is ranked last in every vote in which we do not
  explicitly require something else.

  We also use the following notation in the descriptions of the
  preference orders. For a number $L$, by writing $[L]$ in a
  preference order we mean introducing $L$ new dummy candidates that
  are put in the following positions in this preference order, but
  that in every other preference order are ranked below $b$ (and,
  thus, after $b$ is selected receive no points from these voters).

  \smallskip
  \emph{The voters.}\quad
  We introduce the following voters, where $N$, $T_v$, $T_e$, and $T_p$ are
  four large numbers such that $N$ is much bigger than $T_v$, $T_v$
  is much bigger than $T_e$, and $T_e$ is much bigger than $T_p$; we
  will provide their exact values later.  Each voter has weight one
  unless specified otherwise.
  \begin{enumerate}

  \item For each color $i \in [h]$, we introduce two \emph{vertex-score}
    voters with the following preference orders:
    \begin{align*}
      V^{(i)} \pref [N\cdot(T_v-i)] \pref b,\\
      \revnot{V^{(i)}} \pref [N\cdot(T_v-i)] \pref b,
    \end{align*}
    and two \emph{vertex-selection voters} with the following preference orders:
    \begin{align*}
      V^{(i)} \pref p \pref b, \\
      \revnot{V^{(i)}} \pref p \pref b.
    \end{align*}

  \item For each pair of distinct colors $i, j \in [h]$, $i < j$, we
    introduce two \emph{edge-score voters} with the following preference
    orders:
    \begin{align*}
      E(i,j) \pref [N\cdot(T_e-(i \cdot h + j))] \pref b,\\
      \revnot{E(i,j)} \pref [N\cdot(T_e-(i \cdot h + j))] \pref b,
    \end{align*}
    and two \emph{edge-selection voters} with the following preference orders:
    \begin{align*}
      E(i,j) \pref p \pref b, \\
      \revnot{E(i,j)} \pref p \pref b.
    \end{align*}
    Each of the edge-selection voters has weight $\omega = 8n(d+1)$ 
    (and these are the only voters with non-unit weights). 
    
  \item For each color $i \in [h]$ we introduce two \emph{verification voters}
  with the following preference orders:
    \begin{align*}
      p \pref v^{(i)}_1 \pref E(v^{(i)}_1) \pref \cdots  \pref v^{(i)}_n \pref E(v^{(i)}_n)   \pref b, \\
      p \pref v^{(i)}_n \pref \revnot{E(v^{(i)}_n)} \pref \cdots  \pref v^{(i)}_1 \pref \revnot{E(v^{(i)}_1)}   \pref b. 
    \end{align*}

  \item We introduce the following two voters, the \emph{$p/p'$-score
    voters}, with the following preference orders:
    \begin{align*}
      &p' \pref [N \cdot T_p - h(n+1)(d+1)] \pref b, \\
      &p  \pref [N \cdot T_p - 2h(nd + n + 1)] \pref b.
    \end{align*}

  \item Let $H$ be the total weight of voters introduced so far (clearly,
    $H$ is polynomially upper-bounded in the input size of the \textsc{Multicolored
    Clique} instance $(G,h)$). We introduce $2H+1$ pairs of
    voters with preference orders $b \pref C \setminus \{b\}$ and $b
    \pref \revnot{C \setminus \{b\}}$. We refer to these voters as the
    \emph{bar-score voters}.
  \end{enumerate}

  We assume that the internal tie-breaking prefers $p$ to $p'$---we
  could modify the construction slightly if it were the other way round.

  \smallskip
  \emph{Committee size and budget.}\quad
  We set the committee size to be $k = 1 +h + \binom{h}{2}+1$.
  We use unit prices for the voters and we set the budget $B = |V|-h + |E|-\binom{h}{2}$.

  We claim that for an appropriate choice of $N$, $T_v$, $T_e$, and
  $T_p$ it is possible to ensure that $p$ is in a winning committee
  if and only if there is multicolored size-$h$ clique for $G$.
  We now argue why this is the case.

  \smallskip
  \emph{The idea.}\quad
  The general idea is to show that every shift action (even the zero-vector,
  that means not bribing the voters) of costs at most~$B$ leads to a committee
  that contains:
  \begin{enumerate}
   \item the bar candidate~$b$,
   \item for each color~$i$, one candidate corresponding to a vertex of color~$i$,
   \item for each color pair~$\{i,j\}, i \neq j$ one candidate corresponding to
         an edge incident to a vertex of color~$i$ and to a vertex of color~$j$, and
   \item candidate~$p$ if the selected vertices and edges encode a multicolored
         clique; otherwise the committee contains~$p'$.
  \end{enumerate}
  Furthermore, any such combination of vertices and edges can be selected
  within the given budget, that is, there is a successful shift action
  if a multicolored clique of size~$h$ exists.
  
  \smallskip
  \emph{Correctness.}\quad
  Observe that due to the bar-score voters, irrespective how
  we shift $p$ within the budget, Greedy-Borda-CC will first choose~$b$.
  Thus, from this point on, we compute the score of all
  candidates relative to $b$ (and, in later rounds, the other selected
  members of the committee, but there is a limited number of such
  interactions). 

  We now describe the next $h + \binom{h}{2}+1$ rounds, for each of
  them first describing the situation as if $p$ were not shifted and
  then indicating how the iteration would change with appropriate
  shifts.
  
  After the first iteration, when $b$ is selected, for each color $i \in [h]$,
  every vertex in $V^{(i)}$ has score:
  \[
    \underbrace{(2N\cdot(T_v-i) + (n+1))}_{\text{vertex-score voters}} +
    \underbrace{(n+3)}_{\text{vertex-selection voters}} +
    \underbrace{((n+1)(d+1))}_{\text{verification voters}}.
  \]

  The points in the first bracket come from the vertex-score voters,
  in the second bracket from the vertex-selection voters, and in the
  last bracket from the verification voters. Further, since $T_v$ is
  much larger than $T_e$ and $T_p$, every non-vertex candidate has
  significantly lower score. 

  Thus, in the next $h$ rounds, for each color $i \in [h]$,
  Greedy-Borda-CC adds into the committee one vertex candidate of
  color $i$.  Note that as soon as it picks some vertex candidate of
  color $i$, the scores of all other vertex candidates of this
  color immediately drop by at least $2N\cdot(T_v-i)$ and, so, their
  scores are much too low to be selected in the following rounds.

  By shifting candidate $p$ in the vertex-selection votes, for each
  color $i \in [h]$ and each vertex in~$V^{(i)}$ it is possible to
  ensure that exactly this vertex is selected (it suffices to ensure
  that every other vertex candidate of this color loses one point due
  to $p$ passing him or her). The costs of such shifts are at most
  $|V|-h$ in total. 
  In other words, we can assume that after these $h$ iterations
  Greedy-Borda-CC picks one vertex candidate of each color, and that
  by shift action of cost at most $|V|-h$ it is possible to choose
  precisely which ones.

  In the next $\binom{h}{2}$ iterations, Greedy-Borda-CC picks one
  edge candidate for each pair of colors. Not counting the
  verification voters, for each pair of colors $i,j\in [h]$, $i < j$,
  every edge candidate connecting vertices of colors $i$ and $j$ has
  score:
  \begin{align*}
    \underbrace{(N(T_e -(i \cdot h +j )) + |E(i,j)|+1)}_{\text{edge-score voters}} +
    \underbrace{(\omega (|E(i,j)|+3))}_{\text{edge-selection voters}},
  \end{align*}
  where the points from the first bracket come from the edge-score
  voters and the points in the second bracket come from the
  edge-selection voters. Further, every such candidate receives less than
  $\frac{\omega}{2}$ points from the verification voters.

  Since $T_e$ is much larger than $T_p$, and since by shifting $p$
  forward in the votes of edge-selection voters it is possible to
  remove $\omega$ points from the scores of all but one edge candidate
  in each $E(i,j)$, it is possible to precisely select for each $E(i,j)$ which
  of its members is added to the committee with a shift action of
  total cost $|E|-\binom{h}{2}$.
  Analogously to the case of vertices, whenever some candidate
  from $E(i,j)$ is selected, the other ones lose so many points that they
  have no chance of being selected in any of the following iterations.

  In the final iteration, the algorithm either selects $p'$ or $p$.
  Candidate $p'$ has score $N \cdot T_p - h(n+1)(d+1)$, whereas the
  score of $p$ depends on the vertex and the edge candidates that were so
  far introduced into the committee. If we disregarded all committee
  members selected after $b$, then $p$ would have score $N \cdot T_p$,
  because $p$ receives $N \cdot T_p - 2h(nd + n + 1)$ points from the
  $p/p'$-score voters and $2(nd+n+1)$ points for every color
  $i \in [h]$ from the verification voters.
  However, if we take into account the candidates selected in the preceding
  rounds, then,
  for each color $i \in [h]$, $p$ loses $(n+1)(d+1)$ points
  from the verification voters.  This is true since whenever some candidate
  from $V^{(i)}$ is in the committee, we compute $p$'s score relative
  to this vertex candidate and not relative to $b$.  If these were the
  only points that $p$ lost due to the committee members already
  selected, then---by tie-breaking---$p$ would win against~$p'$.
  However, if for some pair of colors $i,j \in [h]$, $i < j$, the
  committee contained some edge $e$ that connected vertices that are not
  both in the committee, then $p$ would lose at least one more point from
  the verification voters (either for color~$i$ or for color~$j$ or
  for both) because at least one of these verification voters would rank
  $e$ ahead of all the vertex candidates from the committee.  Then
  $p'$ would be selected.  This means that $p$ ends up in the committee if
  and only if due to an appropriate shift action we select vertices
  and edges corresponding to a multicolored clique.  This proves the
  correctness of the reduction for an appropriate choice of $N$,
  $T_v$, $T_e$, and $T_p$, which is discussed next.

  \smallskip
  \emph{The values of $N$, $T_v$, $T_e$, and $T_p$.}\quad
  While one could pick tight precise values, for the
  correctness of the proof it suffices to take, say, $T_p =
  (\binom{h}{2}\cdot|V|\cdot|E|)^3$, $T_e = T_p^3$, $T_v = T_e^3$, and
  $N = T_v^3$.

  Finally, we discuss the inapproximability result that is implied by our reduction.

  \smallskip
  \emph{Inapproximability.}\quad
  Observe that, in fact, the above proof gives our
  inapproximability result. The reason is that for a given constant
  factor $\alpha$, we could increase $N$ by the same factor and it
  would be impossible for $p$ to pass the bar candidate in any of the
  votes, even if we were to spend $\alpha$ times the necessary
  budget. In effect, for $p$ to succeed we would still have to find a
  multicolored clique.
\end{proof}

For the parameterization by the number of unit shifts,
both Borda-CC and Approval-CC are para-$\np$-hard due to the hardness
of \textsc{Winner Determination}.\footnote{The
  literature~\cite{BL11,PRZ08} speaks of hardness of computing the
  score of a winning committee, but one can show that deciding whether
  a given candidate is in some winning committee is $\np$-hard as well
  (and, indeed, this was formally shown by Bredereck et al.~\cite{bre-fal-kac-nie-sko-tal:c:robustness}).}
For Greedy-Approval-CC, PTAS-CC, and Greedy-Borda-CC we obtain
$\wtwo$-hardness results and inapproximability results.

\newcommand{\thmunitshiftsga}{Parameterized by the total number~$s$ of
  unit shifts, \ShiftBribery is $\wtwo$-hard even in case of unit prices
  for Greedy-Borda-CC, Greedy-Approval-CC, and PTAS-CC.
  Further, unless $\wtwo=\fpt$, in these cases the problem is not
  $\alpha$-approximable for any constant~$\alpha$.}

\begin{theorem} \label{W2h-unitshifts} 
  \thmunitshiftsga
\end{theorem}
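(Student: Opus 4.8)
The plan is to prove $\wtwo$-hardness by a parameterized reduction from \probSC, which is $\wtwo$-complete with the solution size $h$ as parameter, and then observe that the same construction yields inapproximability. The motivation is that \probSC asks us to cover a universe $U$ with $h$ sets from a family $\calS$, and the number of unit shifts in our instance should correspond exactly to the number of chosen sets, so that $s$ is a function of $h$ alone. First I would set the committee size and budget so that the only profitable shift action consists of shifting $p$ past certain ``set'' candidates in a small number of designated votes, where each unit shift encodes the choice of one set from $\calS$; this forces the total number of unit shifts to be $h$, making $s = h$ and hence a valid parameterized reduction.

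The core of the construction mirrors the gadgetry already used in the proof of \autoref{thm:greedy-borda-cc-inapprox}: I would introduce a ``bar'' candidate $b$ guaranteed to be selected first by the greedy algorithm (via many high-weight or high-multiplicity bar-score voters), so that all subsequent scores are computed relative to $b$. Then I would create, for each set $s_j \in \calS$, a candidate whose selection into the committee in the greedy rounds can be triggered by a single unit shift of $p$ in a dedicated ``selection'' voter; and for each element $u \in U$, a collection of ``verification'' voters that penalize $p$ unless some selected set-candidate covering $u$ is already in the committee. The preferred candidate $p$ competes in the final round against a direct competitor $p'$, exactly as before, and $p$ wins that round (by tie-breaking) if and only if every element of $U$ is covered by the chosen sets. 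The committee size $k$ is chosen to equal the number of forced rounds (one for $b$, $h$ for the set-candidates, one for $p/p'$), and the budget permits exactly $h$ unit shifts. For \GreedyApprovalCC and \PTASCC, I would repeat the same idea using $t$-Approval scores in place of Borda scores, adjusting the dummy-candidate padding so that the greedy selection order is still forced and the verification penalties still operate; since \PTASCC is a special case of \GreedyApprovalCC with a particular $t$, a single construction parameterized by the scoring scheme should cover all three rules, with the large separating constants $N, T_{\mathrm{set}}, T_p$ scaled as needed.

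The inapproximability claim follows by the same trick as in \autoref{thm:greedy-borda-cc-inapprox}: I would make the ``bar'' separation large enough (scaling by the claimed approximation factor $\alpha$) that $p$ can never be pushed past $b$ no matter how much budget is spent, so any successful shift action of any cost must still encode a genuine set cover of size $h$. Hence if there were an $\fpt$ $\alpha$-approximation algorithm, running it would decide \probSC in $\fpt$ time, implying $\wtwo = \fpt$.

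The main obstacle I anticipate is ensuring that the greedy tie-breaking and the relative-score computations behave correctly across the verification voters when only $h$ unit shifts (rather than the unbounded shifts of the number-of-voters parameterization) are available: because the number of unit shifts is the parameter, each unit shift is ``expensive'' and must accomplish exactly one meaningful task, so the gadget penalties must be calibrated so that a single shift in a selection voter decisively changes which candidate the greedy rule picks, while stray interactions among already-selected committee members contribute only lower-order point differences that the large separating constants $N, T_{\mathrm{set}}, T_p$ dominate. Getting these magnitude separations right---so that the ``set-candidate'' rounds, the verification penalties, and the final $p/p'$ comparison all resolve in the intended order---will be the delicate part, just as the choice of $N, T_v, T_e, T_p$ was in the previous theorem.
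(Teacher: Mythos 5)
You correctly identify \probSC as the source problem, correctly note that the budget $h$ makes the number of unit shifts a function of the parameter alone, and the inapproximability trick (padding so that no $\alpha$-fold budget ever buys anything beyond the intended shifts) matches the paper's. However, there is a genuine gap in the core gadget: your committee size $k = h+2$ (one round for $b$, $h$ rounds for ``the chosen set-candidates,'' one for $p/p'$) does not come with a workable selection mechanism. A unit shift of $p$ can only \emph{decrease} the score of candidates that $p$ passes; it cannot promote a chosen set-candidate above its $|\calS|-h$ unchosen rivals. To make exactly the $h$ chosen sets win the greedy rounds you would have to demote all $|\calS|-h$ rivals, which costs far more than $h$ unit shifts; and if instead all $|\calS|$ set-candidates are left within one point of each other, the greedy rule fills the $h$ rounds by tie-breaking, independently of your shifts. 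The paper avoids this by setting the committee size to $|\calS|+|U|+2$ (respectively $1+s+r+1$) and introducing, for every set $S$ and every element $u$, a \emph{twin pair} $c^-(\cdot), c^+(\cdot)$ with almost identical supporter sets: every round's outcome is predetermined up to which twin wins, a single unit shift in the dedicated $S$-voter demotes $c^-(S)$ by exactly one point and flips that round to $c^+(S)$, this cascades to flipping $c^-(u)$ to $c^+(u)$ for each covered $u$ via the covering voters, and $p$ beats $p'$ in the last round iff all element rounds flipped. Your sketch omits both the twin structure and the fact that all sets and all elements must place a representative in the committee, and without these the reduction does not go through. (A secondary point: the paper's approval-based construction does not use the bar candidate at all; it controls the greedy order purely by approval counts, reserving $b$ for the Borda-based variant.)
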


\begin{proof}
  First, we show the result for Greedy-Approval-CC for $t$-Approval
  satisfaction function with $t\ge3$ (which implies the same result for PTAS-CC).
  Second, we show how the proof idea can be adapted to obtain the
  same result for Greedy-Borda-CC.\bigskip
 
  \noindent\emph{Greedy-Approval-CC.}\quad
  We reduce from the \probSC problem, which is \wtwo-hard
  parameterized by the set cover size~$h$.
  Given a set~$U$ of elements, a family~$\calS$
  of subsets of~$U$, and an integer~$h$, \probSC asks whether
  there is a subset of $h$ sets from $\calS$ whose union is $U$.
  Let $(\calS,U,h)$ be an instance of \probSC, where~$\calS = (S_1, \ldots, S_s)$ is a
  collection of sets, $U =\{u_1, \ldots, u_r\}$ is the universe,
  and~$h$ is the solution size.  We construct a \GreedyApprovalCC
  \ShiftBribery instance as follows. \smallskip

  \emph{Important candidates.}\quad
  For each element~$u\in U$, we introduce two \emph{element
    candidates}, $c^-(u)$ and~$c^+(u)$.  Analogously, for each set
  $S \in \calS$, we introduce two \emph{set candidates}, $c^-(S)$
  and~$c^+(S)$.  Furthermore, we introduce the preferred candidate~$p$
  and a candidate~$p'$.\smallskip

  \emph{Dummy candidates.}\quad For each voter (to be specified
  later), we introduce up to $(t-1)$ dummy candidates.  Each dummy
  candidate is approved by exactly one voter, for whom he or she is
  introduced. All the important candidates will have much higher
  scores than the dummy ones and, so, no dummy candidate will join the
  winning committee, irrespective of the shifts of the preferred
  candidate.
  The reason for introducing the dummy candidates is to ensure that
  even though we use $t$-Approval scores, we can construct each voter
  so that he or she approves any number $t'$, $1 \leq t' \leq t$, of
  important candidates (and the remaining $t-t'$ top positions are
  filled with the dummy candidates).\smallskip

  \emph{Committee size and budget.}\quad We set the budget equal to
  the size~$h$ of the set cover and the committee size
  to~$|\calS|+|U|+1$.\smallskip

  \emph{The idea.}\quad The idea of the reduction is to construct an
  election where the Greedy-Approval-CC rule first simulates the
  process of choosing the sets from the \textsc{Set Cover} instance
  and then chooses the elements from the universe that are covered.
 
  We will form the preference orders of the voters so that, prior to
  shifting the preferred candidate, in the first $s$ iterations
  Greedy-Approval-CC would choose candidates
  $c^-(S_1), \ldots, c^-(S_s)$, in the next $r$ iterations it would
  choose candidate $c^-(u_1), \ldots, c^-(u_r)$, and finally it would
  choose~$p'$. However, for each set $S_i \in \calS$, we can shift $p$
  by one position in one vote so that instead of selecting $c^-(S_i)$,
  Greedy-Approval-CC will choose $c^+(S_i)$ in the appropriate round
  (we call such set~$S_i$ selected). Then, for each element~$u$, if
  there is a selected set that contains~$u$, then Greedy-Approval-CC
  will select~$c^+(u)$ instead of~$c^-(u)$ in the appropriate
  round. In the end, if we select $h$~sets that cover all elements
  from the universe, Greedy-Approval-CC will choose~$p$ instead of~$p'$.
  Intuitively, candidates $c^{-}(S)$ correspond to sets $S$ that are
  not included in the \textsc{Set Cover} solution, candidates $c^+(S)$
  correspond to the sets that are used in the solution, candidates
  $c^-(u)$ correspond to elements that are not covered, and candidates
  $c^+(u)$ correspond to the covered ones.

  \smallskip

  \emph{Specifying the voters.}\quad To specify the preference order
  of a voter, for our purposes it suffices to provide the set of at most $t$ important
  candidates that this voter approves (i.e., ranks on the top $t$
  positions) and indicate if one of these candidates is ranked on the
  $t$-th position, right before $p$ (so that $p$ can push this
  candidate out of the approved area and enter it him- or herself). All
  remaining top~$t$ positions are filled with dummy voters. If $p$
  is not ranked on the $t+1$-st position, then we put $p$ on the last
  position in the preference order (so it is impossible to shift $p$
  to an approved position within the budget). All candidates that
  have not been mentioned so far are ranked in some arbitrary order,
  below the top~$t$ positions (or below the top $t+1$ positions, if
  $p$ is on the $t+1$-st one).
  \smallskip

  \emph{The voters.}\quad The set of voters contains $|\calS|$~many
  \emph{$\calS$-voters}, $|\calS|\cdot|U|$~many
  \emph{$\calS$-$U$-voters}, and $|U|$~many $U$-voters:
  \begin{enumerate}
  \item For each set~$S \in \calS$, there is one $S$-voter that
    approves~$c^-(S)$ (and some dummy candidates), so that with a
    single unit shift the preferred candidate $p$ can push $c^-(S)$
    from the approved area and take its place.
  \item For each set~$S \in \calS$ and each element~$u \in U$, there
    is one $S$-$u$-voter that approves: (a) $c^+(S)$, (b) $c^-(u)$
    provided that $u \in S$, and (c) some dummy candidates.
   
  \item For each element~$u \in U$, there is one $u$-voter that
    approves~$p'$ and $c^+(u)$.
  \end{enumerate}
  There are further auxiliary voters that allow us to appropriately
  set the number of approvals for each candidate:
  \begin{enumerate}
  \item For each $S \in \calS$, there are $|\calS|^5\cdot|U|^5-j$
    voters that approve~$c^-(S)$ and~$c^+(S)$ (and some dummy
    candidates).
  \item For each $S \in \calS$, there are additional $|U|-1$~voters
    that approve~$c^-(S)$ (and some dummy candidates).
  \item For each $u \in U$, there are $|\calS|^4\cdot|U|^4-i$ voters
    that approve~$c^-(u)$ and~$c^+(u)$ (and some dummy candidates).

  \item For each $u \in U$, there are additional
    $|\{S \in \calS \mid u \in S\}|-1$ voters that only
    approve~$c^+(u)$ (and some dummy
    candidates). 
  \item There are $|\calS|^2\cdot|U|^2$ voters that approve~$p$
    and~$p'$ (and some dummy candidates).
  \item There are $h-1$~additional voters that only approve~$p'$ (and
    some dummy candidates).
  \end{enumerate}
  By our convention, all but the set voters rank $p$ on the last
  position and, thus, it is too expensive to bribe them to shift $p$
  to an approved position.

  The construction can be computed in polynomial time. Our parameter,
  the number of unit shifts, is upper-bounded by the budget, which is
  identical to the set cover size~$h$.  Before we prove the
  correctness of the reduction, let us briefly discuss the properties
  of the election prior to shifting $p$. \smallskip

  \emph{Scores, ties, and the unbribed election.}\quad First, consider
  the scores of the candidates in the very first round of the voting
  rule, listed below:
  \begin{enumerate}
  \item For each set $S \in \calS$, both candidate~$c^-(S)$ and
    candidate~$c^+(S)$ have $|\calS|^5\cdot|U|^5-j + |U|$ approvals.
  \item For each element $u \in U$, both candidate~$c^-(u_i)$ and
    candidate~$c^+(u_i)$ have
    $|\calS|^4\cdot|U|^4-i + |\{S \in \calS \mid u_i \in S\}|$
    approvals.
  \item Candidate~$p'$ has $|\calS|^2\cdot|U|^2 + |U|+ h-1$ approvals.
  \item Candidate~$p$ has $|\calS|^2\cdot|U|^2$ approvals.
  \end{enumerate}
  We set the tie-breaking order of \GreedyApprovalCC so that
  candidate~$p'$ is preferred to candidate~$p$ and for each
  $x \in U \cup\calS$, candidate~$c^-(x)$ is preferred to
  candidate~$c^+(x)$.\footnote{The reduction can be adapted to work
    for any given tie-breaking.}

  One can verify that in the unbribed election the candidates will
  join the committee in the following order:
  $c^-(S_1),c^-(S_2),\dots,c^-(S_s),c^-(u_1),c^-(u_2),\dots,c^-(u_r)$,
  and finally~$p'$.  To see this, note that for each
  $x \in U \cup \calS$, each pair of candidates~$c^-(x)$ and~$c^+(x)$
  is approved by almost the same sets of voters.  As soon as one
  of~$c^-(x)$ and~$c^+(x)$ joins the committee, the other loses nearly
  all approvals and has no chance of joining the
  committee. Furthermore, the candidates corresponding to the sets
  have higher numbers of approvals than those corresponding to the
  elements, and within both groups the numbers of approvals decrease
  as the indices of the respective sets and elements
  increase. Finally, tie-breaking ensures that the \GreedyApprovalCC
  chooses the $c^-$ candidates.\smallskip

  \emph{Candidate scores.}\quad In the following text, we will often
  speak of the scores of the candidates. For a given round (always
  clear from the context), the score of a candidate is the number of
  voters that approve this candidate and do not approve any candidate
  already included in the committee in the preceding rounds.
  \smallskip

  \emph{The impact of shifting $p$.}\quad The only shift actions that
  affect the result of the election and that are within the given
  budget regard up to $B = h$ set voters.  Let
  $c^-(S_{j_1}),\dots,c^-(S_{j_h})$ be the candidates that were
  originally approved by the bribed set voters instead of
  approving~$p$.  We call $\calS^*:=\{S_{j_1},S_{j_2},\dots,S_{j_h}\}$
  the \emph{selected sets}.  Applying the corresponding shift actions
  will decrease the score of each candidate
  $c^-(S_{j_\ell}), 1 \le \ell \le h$, by one and increase the score
  of~$p$ by~$h$.  One can verify that, in effect, for each
  $1 \le \ell \le h$, \GreedyApprovalCC will select $c^+(S_{j_\ell})$
  instead of $c^-(S_{j_\ell})$ to join the committee in the respective
  round.  Now, observe that for each $u \in S_{j_\ell}$, there is one
  voter that approves $c^+(S_{j_\ell})$ and $c^-(u)$.  This means that
  the score of each~$c^-(u)$ for $u \in \bigcup_{S \in \calS^*}S$ is
  decreased by at least one after the first~$|\calS|$ candidates
  joined the committee.  Hence, if $c^+(S_{j_\ell})$ joins the
  committee instead of $c^-(S_{j_\ell})$, then also~$c^+(u)$ joins
  instead of~$c^-(u)$.  Finally, observe that, after $s+r$ candidates
  joined the committee, the score of~$p'$ is decreased by the number
  of candidates~$c^+(u)$ that joined the committee instead of $c^-(u)$
  (this is due to the $U$-voters).\smallskip 

  \emph{Correctness.}\quad We show that there is a subset of $h$~sets
  from~$\calS$ whose union is~$U$ if and only if there is a successful
  set of shift actions of cost~$h$.
  For the ``if'' case, assume that there is a set
  $\calS' \subseteq \calS$ of $h$~sets whose union is~$U$.  Then,
  bribing the $S$-voter for each~$S\in \calS'$ to approve~$p$ instead
  of~$c^-(S)$ costs~$h$ and successfully makes~$p$ a winner: From the
  above discussion about the impact of shift actions, we immediately
  conclude that after the first $s+r$ rounds, the committee contains
  all the $c^+(u)$ candidates, so the score of $p'$ is
  $|\calS|^2\cdot|U|^2+h-1$, and the score of $p$ is
  $|\calS|^2\cdot|U|^2+h$. Thus, in the final round, $p$ is included
  in the committee instead of $p'$.

  For the ``only if'' case, assume that there is a shift action with
  cost~$h$ that makes $p$~join the committee.  Since $p$~can gain at
  most~$h$ points, $p'$ has to lose at least~$|U|$ points (for the
  final round).  However, the only (important) candidates that are
  approved together with~$p'$ by some voters are the element
  candidates~$c^+(u)$.  To decrease the score of~$p'$ by~$|U|$, all
  the $c^+(u)$ candidates must join the committee instead of the
  $c^-(u)$ candidates.  From the above discussion about the impact of
  shift actions, this is possible only if the union of the selected
  sets is~$U$.\smallskip 

  \emph{Inapproximability.}\quad With a slight modification of the
  above construction, we obtain (fixed-parameter)
  inapproximability. Let $\alpha > 1$ be the considered approximation
  ratio.  First, note that even within a budget of~$\alpha \cdot B$
  one can only afford to bribe the set voters, because in all
  other voters $p$~is ranked last (if there are fewer than
  $\alpha \cdot B$ candidates between the last position in a vote and
  the $t$-th one, then we add sufficiently many never-approved dummy
  candidates).  Second, introduce another pair of important
  candidates, $d$ and~$d'$, and let the set voters additionally
  approve~$d$.  Next, introduce:
  \begin{enumerate}
  \item $|\calS|^3\cdot|U|^3$ voters that approve~$d$ and~$d'$ (and
    some dummy candidates),
  \item $|S|-h$~voters that only approve~$d'$ (and some dummy
    candidates),
  \item $|\calS|\cdot|U|$ voters that approve~$p'$ and~$d$ (and some
    dummy candidates), and further 
  \item $|\calS|\cdot|U|$ voters that only approve~$d'$ (and some
    dummy candidates).
  \end{enumerate}
  Finally, set the tie-breaking so that $d$ is preferred to $d'$ and
  increase the committee size by one.

  The first $|\calS|+|U|$ rounds of the \GreedyApprovalCC procedure
  proceed as in the original construction.  As long as at most~$h$ set
  voters are bribed, candidate~$d$ will join the committee in
  round~$|\calS|+|U|+1$.  In consequence, candidate~$d'$ loses almost
  all points and has no chance to join the committee, and
  candidate~$p'$ loses all additional approvals (introduced by the
  $|\calS|\cdot|U|$ new voters that approve both $p'$ and $d$).  That
  is, the last round proceeds as in the original construction.
  However, if one bribes more than~$h$ set voters, then candidate~$d'$
  will join the committee in round~$|\calS|+|U|+1$, $p'$~keeps the
  additionally introduced approvals, and $p$~has no chance to join the
  committee in the last round.

  It follows that, even with a budget of~$\alpha\cdot B$, one can only
  make~$p$ become member of a winning committee if one selects a
  subset of at most~$h$ sets from~$\calS$ whose union is~$U$.\bigskip

  \noindent
  \emph{Greedy-Borda-CC.}\quad For the case of Greedy-Borda-CC we also
  give a reduction from the \probSC problem.  The basic idea of the
  construction is very similar to that in the proof for
  Greedy-Approval-CC.  However, to implement this idea, we also use
  some concepts from the proof of
  \autoref{thm:greedy-borda-cc-inapprox}.  To this and, we use the
  same notational conventions as in the proof of
  \autoref{thm:greedy-borda-cc-inapprox}, and we use the bar candidate
  in the same way.

  Given an instance $(\calS,U,h)$ of \probSC
  with $\calS = (S_1, \ldots, S_s)$ denoting the given sets over
  the universe~$U =\{u_1, \ldots, u_r\}$,
  we construct a \GreedyBordaCC \ShiftBribery instance as follows.

  We form the following set of candidates:
  \begin{enumerate} 
  \item We introduce the preferred candidate $p$, his or her opponent
    $p'$, and the bar candidate $b$.
  \item For each set $S_i \in \calS$, we introduce two candidates $c^-(S_i)$ and $c^+(S_i)$.
  \item For each element $u_j \in U$, we introduce candidates $c^-(u_j)$
    and $c^+(u_j)$.
  \item We introduce sufficiently many dummy candidates.
  \end{enumerate}

  Let $N$, $T_s$, $T_u$, and $T_p$ be some sufficiently large numbers
  such that $N$ is much larger than $T_s$, $T_s$ is much larger than
  $T_u$, and $T_u$ is much larger than $T_p$ (we will specify their
  values later).  We introduce the following voters:
  \begin{enumerate}
  \item For each set $S_i \in \calS$, we introduce two \emph{set-score
    voters} with preference orders:
    \begin{align*}
      c^-(S_i) \pref c^+(S_i) \pref [N\cdot(T_s-i)] \pref b, \\
      c^+(S_i) \pref c^-(S_i) \pref [N\cdot(T_s-i)] \pref b.
    \end{align*} Further, for each set we introduce two \emph{set-selection
    voters} with preference orders:
    \begin{align*}
    c^-(S_i) \pref p \pref b,\\
    c^+(S_i) \pref [1] \pref b.
    \end{align*}

  \item For each element $u_j \in U$, we introduce two \emph{element-score
    voters} with preference orders:
    \begin{align*}
       c^-(u_j) \pref c^+(u_j) \pref [N\cdot(T_u-j)] \pref b, \\
       c^+(u_j) \pref c^-(u_j) \pref [N\cdot(T_u-j)] \pref b.
    \end{align*}

  \item For each $u_j \in U$, we introduce a \emph{verification voter} $c^+(u_j)
    \pref p' \pref b$.

  \item For each element $u_j \in U$, and each set $S_i \in \calS$
    such that $u_j \in S_i$, we introduce a \emph{covering voter} with
    preference order:
    \[ c^+(S_i) \pref c^-(u_j) \pref b. \] Further, for each candidate $c \in U
    \cup \{c^-(S_1), c^+(S_1)$, $\ldots, c^-(S_s), c^+(S_s)\}$, we introduce
    exactly so many \emph{filler voters} with preference orders of the form $c \pref b$
    so that, relative to $b$, all these candidates receive the same score from
    the verification, covering, and filler voters (taken together).

  \item We introduce two \emph{$p/p'$-score voters} with preference orders
    $p' \pref [N\cdot(T_p) + 2h] \pref b$ and $p \pref [N\cdot(T_p)] \pref
    b$.

  \item Let $H$ be the number of voters introduced so far (clearly,
    $H$ is polynomially upper-bounded in the size of the input instance). We
    introduce $2H+1$ pairs of voters with preference orders $b \pref C
    \setminus \{b\}$ and $b \pref \revnot{C \setminus \{b\}}$. We
    refer to these voters as the \emph{bar-score voters}.
  \end{enumerate}

  We set the committee size to be $1 + s + r +1$, and we set the
  budget $B = h$. We use unit price functions. The internal
  tie-breaking is such that $p$ precedes $p'$, for each $S_i \in
  \calS$, $c^-(S_i)$ precedes~$c^+(S_i)$, and for each $u_j \in U$,
  $c^-(u_j)$ precedes~$c^+(u_j)$.

  The correctness proof is analogous to that for \GreedyApprovalCC.
  To see this, let us now analyze how \GreedyBordaCC proceeds on the
  just-constructed election.  As in the proof of
  \autoref{thm:greedy-borda-cc-inapprox}, it is clear that in the
  first iteration it picks $b$.  Due to the values of $N$ and $T_s$, in
  the next $s$ iterations, for each $S_i \in \calS$, Greedy-Borda-CC
  either adds $c^-(S_i)$ to the committee or it adds $c^+(S_i)$ to the
  committee. With a shift action of cost $h$---by shifting $p$ forward
  in the votes of the set-selection voters---we can select which $h$
  of the $c^+(S_i)$ candidates are introduced into the committee (indeed,
  we need to introduce~$h$ of them to increase $p$'s score---in the final
  iteration---by~$h$). 

  In the next $r$ iterations, for each $j$ Greedy-Borda-CC picks
  either $c^-(u_j)$ or $c^+(u_j)$. One can verify that it picks
  exactly those $c^+(u_j)$ candidates for which in the preceding
  iterations it has picked at least one candidate $c^+(S_i)$ such that
  $u_j \in S_i$ (due to the covering voters).

  In the final iteration, Greedy-Borda-CC either picks~$p$ or~$p'$. It
  picks the former one exactly if it managed to pick $h$ candidates
  from $\calS':=\{c^+(S_{j_1}), \ldots, c^+(S_{j_h})\}$ and all
  candidates $c^+(u_j)$ (since then $p'$ loses $|U|$ points from the
  verification voters and has score $N\cdot T_p + 1 +2h$, $p$ has 
  the same score, as it gets $2h$ points from the set selection
  voters, and tie-breaking prefers $p$ to $p'$).
  This happens if and only if we applied a shift action that ensured
  selection of those $h$ of the $c^+(S_i)$ candidates that correspond
  to a set cover, that is, $\bigcup_{S\in\calS'}S=U$.

  To complete the proof for the \GreedyBordaCC case, we need to pick the
  values of $N$, $T_s$,  $T_u$, and $T_p$.
  It is easy to see that the values $T_p = (r\cdot s \cdot h)^3$,
  $T_u = T_p^3$, $T_s = T_u^3$, and $N = T_s^3$ suffice.

  This proves $\wtwo$-hardness of \textsc{Shift-Bribery} for
  Greedy-Borda-CC.
  To see the inapproximability result, one can use an extension to
  the construction that works analogously to the extension in the
  proof for \GreedyApprovalCC.
\end{proof}

\section{Conclusion}
\label{sec:conclusion}

We studied the complexity of \textsc{Shift Bribery} for two families
of multiwinner rules:
  one, represented by SNTV, Bloc, and $k$-Borda, in which 
rules pick $k$ best candidates according to appropriate single-winner
scoring rules, and another of Chamberlin-Courant rules and their
approximate variants, which focus on providing good representatives.
While we have shown low complexity for SNTV and Bloc
(just like for the single-winner rules on which they are based), we
have shown that \textsc{Shift Bribery} is significantly harder to
solve for $k$-Borda than for its single-winner variant, Borda. The
situation is even more dramatic for the Chamberlin-Courant family of
rules, where in addition to $\wone$- and $\wtwo$-hardness results, we
also obtain inapproximability results.

We focused on the case where we want to ensure a candidate's
membership in \emph{some} winning committee, but it would also be
natural to require membership in \emph{all} winning committees.  In
fact, all our results hold in this model as well, via simple tweaks
(and, in particular, the results for \GreedyBordaCC,
\GreedyApprovalCC, and PTAS-CC already are in this setting because
these rules always produce a single committee).

Putting an even more demanding bribery goal of involving more than one
candidate to become part of the winning committee(s) is left for
future studies.  Areas of future research also include studying
bribery problems for multiwinner settings with partial preference
orders and studying multiwinner rules based on the Condorcet criterion
(for the hardness of winner determination in such rules, see the works
of Sekar et al.~\cite{sek-sik-xia:c:gehrlein-stable} and Aziz et
al.~\cite{azi-elk-fal-lac-sko:c:gehrlein-and-local}).
Furthermore, our FPT algorithms with
respect to the parameter number of candidates rely on integer linear
programming formulations.  It seems challenging to replace these
algorithms by direct combinatorial algorithms that give us a better
understanding of the problems and potentially better running times.
This reflects a general challenge in the context of parameterized
algorithms for Computation Social Choice~\cite[Key question
1]{BCFGNW14}.

\section*{Acknowledgments}
The authors were supported in part by the DFG project PAWS (NI
369/10) and the NCN project DEC-2012/06/M/ST1/00358.
Nimrod Talmon, while being at TU~Berlin, was supported by the DFG Research Training Group ``Methods for Discrete Structures'' (GRK 1408).
Piotr Faliszewski's visits to TU Berlin were supported by the COST action IC1205 and
by a Friedrich Wilhelm Bessel Research Award from the Alexander-von-Humboldt Foundation, Bonn, Germany.

{\small
\bibliographystyle{abbrvnat}
\newcommand{\bibremark}[1]{}

}

\end{document}